\newtheorem{theorem}{Theorem}
\newtheorem{lemma}{Lemma}
\newtheorem{definition}{Definition}
\newtheorem{assumption}{Assumption}
\newtheorem{remark}{Remark}
\def\indicator{{\bf 1}}
\def\CP{\text{CP}}
\def\NCP{\text{NCP}}
\def\cal{\text{cal}}
\def\tr{\text{tr}}
\def\test{\text{test}}
\def\and{\mathrm{and}}
\def\class{\mathrm{class}}
\def\calX{\mathcal X}
\def\calY{\mathcal Y}
\def\calZ{\mathcal Z}
\def\calN{\mathcal N}
\def\calB{\mathcal B}
\def\calR{\mathcal R}
\def\calD{\mathcal D}
\def\E{\mathbb E}
\def\P{\mathbb P}
\def\R{\mathbb R}
\title{
Improving Uncertainty Quantification of Deep Classifiers via Neighborhood Conformal Prediction: Novel Algorithm and Theoretical Analysis
}
\author{
    Subhankar Ghosh\equalcontrib,
    Taha Belkhouja\equalcontrib,
    Yan Yan,
    Janardhan Rao Doppa
}
\begin{document}

\maketitle

\begin{abstract}
Safe deployment of deep neural networks in high-stake real-world applications requires theoretically sound uncertainty quantification. Conformal prediction (CP) is a principled framework for uncertainty quantification of deep models in the form of prediction set for classification tasks with a user-specified  coverage (i.e., true class label is contained with high probability). This paper proposes a novel algorithm referred to as {\em Neighborhood Conformal Prediction (\texttt{NCP})} to improve the efficiency of uncertainty quantification from CP for deep classifiers (i.e., reduce prediction set size). The key idea behind \texttt{NCP} is to use the learned representation of the neural network to identify $k$ nearest-neighbors calibration examples for a given testing input and assign them importance weights proportional to their distance to create adaptive prediction sets. We theoretically show that if the learned data representation of the neural network satisfies some mild conditions, \texttt{NCP} will produce smaller prediction sets than traditional CP algorithms. Our comprehensive experiments on CIFAR-10, CIFAR-100, and ImageNet datasets using diverse deep neural networks strongly demonstrate that \texttt{NCP} leads to significant reduction in prediction set size over prior CP methods.
\end{abstract}

\section{Introduction}

Recent advances in deep learning have allowed us to build models with high accuracy. However, to safely deploy these deep models in high-stake applications (e.g, medical diagnosis) for critical decision-making, we need theoretically-sound uncertainty quantification (UQ) to capture the deviation of the prediction from the ground-truth output. The UQ could take the form of a {\em prediction set} (a subset of candidate labels) for classification tasks. For example, in medical diagnosis, such prediction sets will allow a doctor to rule out harmful diagnoses such as stomach cancer even if the most likely diagnosis is a stomach ache. Conformal prediction (CP) \cite{vovk1999machine,vovk2005algorithmic,shafer2008tutorial} is a principled framework for UQ that provides formal guarantees for a user-specified {\em coverage}: ground-truth output is contained in the prediction set with a high probability $\alpha$ (e.g., 90\%) for classification. Additionally, UQ from CP is adaptive and will reflect the difficulty of testing inputs: size of the prediction set will be large for difficult inputs and small for easy inputs. 

There are two key steps in CP. First, in the prediction step, we use a trained model (e.g., deep neural network) to compute {\em conformity scores} which measure similarity between calibration examples and a testing input. Second, in the calibration step, we use the conformity scores on a set of calibration examples to find a threshold to construct prediction set which meets the coverage constraint (e.g., $\alpha$=90\%). The {\em efficiency} of UQ from CP \cite{sadinle2019least} is measured in terms of size of the prediction set (the smaller the better). There is an inherent trade-off between coverage and efficiency. For example, it is easy to achieve high coverage with low efficiency (i.e., large prediction set) by including all or most candidate labels in the prediction set. CP for classification is relatively under-studied. Recent work has proposed conformity scores based on ordered probabilities \cite{NEURIPS2020_244edd7e,angelopoulos2021uncertainty} for UQ based on CP for classification and do not come with theoretical guarantees about efficiency. The main research question of this paper is: {\em how can we improve CP to achieve provably higher efficiency by satisfying the marginal coverage constraint for (pre-trained) deep classifiers?} 

To answer this question, this paper proposes a novel algorithm referred to as {\em {\bf N}eighborhood {\bf C}onformal {\bf P}rediction (\texttt{NCP})} that is inspired by the framework of localized CP (LCP) \cite{guan2021localized}. The key idea behind LCP is to assign higher importance to calibration examples in the local neighborhood of a given testing input. This is in contrast to the standard CP, which assigns equal importance to all calibration examples. However, there is no theoretical analysis of LCP to characterize the necessary conditions for reduced prediction set size and no empirical evaluation on real-world classification tasks. The effectiveness of NCP critically depends on the localizer and weighting function. The proposed \texttt{NCP} algorithm specifies a concrete localizer and weighting function using the learned input representation from deep neural network classifiers. For a given testing input, \texttt{NCP} identifies $k$ nearest neighbors and assigns them importance weights proportional to their distance defined using the learned representation of deep classifier. 

We theoretically analyze the expected threshold of \texttt{NCP}, which is typically used to measure the efficiency of CP algorithms, i.e., a smaller expected threshold indicates smaller prediction sets.  Specifically, we prove that if the learned data representation of the neural network satisfies some mild conditions in terms of separation and concentration, \texttt{NCP} will produce smaller prediction sets than traditional CP algorithms. To the best of our knowledge, this is the first result to give affirmative answer to the open question: what are the necessary conditions for \texttt{NCP}-based algorithms to achieve improved efficiency over standard CP? Our theoretical analysis informs us with a principle to design better CP algorithms: automatically train better localizer functions to reduce the prediction set size for classification. 

We performed comprehensive experiments on CIFAR-10, CIFAR-100, and ImageNet datasets using a variety of deep neural network classifiers to evaluate the efficacy of \texttt{NCP} and state-of-the-art CP methods. Our results demonstrate that the localizer and weighting function of \texttt{NCP} is highly effective and results in significant reduction in prediction set size; and a better conformity scoring function further improves the efficacy of \texttt{NCP} algorithm.  Our ablation experiments clearly match our theoretical analysis of the \texttt{NCP} algorithm.

\vspace{1.0ex}

\noindent {\bf Contributions.} The key contribution of this paper is the development of Neighborhood CP (\texttt{NCP}) algorithm along with its theoretical and empirical analysis for improving the efficiency of uncertainty quantification of deep classifiers. 


\begin{itemize}
\setlength\itemsep{0em}
\item Development of the Neighborhood CP algorithm by specifying an effective localizer function using the learned representation from deep classifiers. \texttt{NCP} is complementary to CP methods, i.e., any advancements in CP will automatically benefit \texttt{NCP} for uncertainty quantification.

\item Novel theoretical analysis of the Neighborhood CP algorithm to characterize when and why it will improve the efficiency of UQ over standard CP methods.
    
\item Experimental evaluation of \texttt{NCP} on classification benchmarks using diverse deep models to demonstrate its efficacy over prior CP methods. Our code is publicly available on the GitHub repository: \href{https://github.com/1995subhankar1995/NCP}{https://github.com/1995subhankar1995/NCP}.
\end{itemize}


\section{Background and Problem Setup}


We consider the problem of uncertainty quantification (UQ) of pre-trained deep models for classification tasks. Suppose $x$ is an input from the space $\mathcal{X}$ and $y^* \in \mathcal{Y}$ is the corresponding ground-truth output. 
Let $\calZ$ = $\calX \times \calY$ be the joint space of input-output pairs and the underlying distribution on $\calZ$ be $\calD_{\calZ}$.
For classification tasks, $\mathcal{Y}$ is a set of $C$ discrete class-labels $\{1, 2, \cdots, C\}$. As per the standard notation in conformal prediction, $X$ is a random variable, and $x$ is a data sample.

\vspace{1.0ex}

\noindent {\bf Uncertainty quantification.} Let $\calD_\tr$ and $\calD_\cal$ correspond to sets of training and calibration examples drawn from a target distribution $\calD_{\calZ}$.
We assume the availability of a pre-trained deep model $F_{\theta}: \mathcal{X} \mapsto \mathcal{Y}$, where $\theta$ stands for the parameters of the deep model. For a given testing input $x$, we want to compute UQ of the deep model $F_{\theta}$ in the form of a prediction set $\mathcal{C}(x)$, a subset of candidate class-labels $\{1, 2, \cdots, C\}$. 

\vspace{1.0ex}

\noindent {\bf Coverage and efficiency.} The performance of UQ is measured using two metrics. First, the (marginal) {\em coverage} is defined as the probability that the ground-truth output $y^*$ is contained in $\mathcal{C}(x)$ for a testing example $(x, y^*)$ from the same data distribution $\calD_{\calZ}$, i.e., $\mathbb{P}(y^* \in \mathcal{C}(x))$. The empirical coverage \texttt{Cov} is measured over a given set of testing examples $\calD_\test$. Second, {\em efficiency}, denoted by \texttt{Eff}, measures the cardinality of the prediction set $\mathcal{C}(x)$ for classification. Smaller prediction set means higher efficiency. It is easy to achieve the desired coverage (say 90\%) by always outputting $\mathcal{C}(x)$=$\mathcal{Y}$ at the expense of poor efficiency. 

\vspace{1.0ex}

\noindent {\bf Conformal prediction (CP).} CP is a framework that allows us to compute UQ for any given predictor through a conformalization step. The key element of CP is a measure function $V$ to compute the {\em conformity} (or {\em non-conformity}) score, measures similarity between labeled examples, which is used to compare a given testing input to the calibration set $\calD_\cal$. 
Since non-conformity score can be intuitively converted to a conformity measure \cite{vovk2005algorithmic}, we use non-conformity measure for ease of technical exposition.  A typical method based on split conformal prediction (see Algorithm 1) has a threshold parameter $\tau \rightarrow t$ to compute UQ in the form of prediction set for a given testing input $x$ and deep model $F_{\theta}$. A small set of calibration examples $\calD_\cal$ are used to select the threshold $t$ for achieving the given coverage $1-\alpha$ (say 90\%) empirically on  $\calD_\cal$.  For example, in classification tasks, we select the $t$ as $(1-\alpha)$-quantile of $V(x,y^*)$ on the calibration set  $\calD_\cal$ and the prediction set for a new testing input $x$ is given by $\mathcal{C}(x)$=$\{y: V(x, y) \le t\}$. CP provides formal guarantees that $\mathcal{C}(x)$ has coverage $1-\alpha$ on a future testing input from the same distribution $\calD_{\calZ}$.

\begin{algorithm}[!h]

    \caption{Split Conformal Prediction (CP)} 
    \label{alg:gen_CP}
    \begin{algorithmic}[1]
    
    \STATE \textbf{Input}: Significance level $\alpha \in (0, 1)$;
    Randomly split data into training set $\calD_\tr$ and calibration set $\calD_\cal = \{ Z_1,\cdots, Z_n \}$.

    \STATE If predictor $F_\theta$ is not given, train a prediction model $F_\theta$ on the training set $\calD_\tr$
    
    \STATE Compute non-conformity score $V_i$ for each example $Z_i \in \calD_\cal$

    \STATE Compute $\hat{Q}^\CP(\alpha, V_{1:n})$ as the $\lceil (1-\alpha)(1 + |\mathcal{D}_{cal}|) \rceil$th smallest value in $\{V_i\}_{i \in \calD_\cal }$ as in (\ref{eq:CP}).

    \STATE $\hat {\mathcal{C}}(x_{n+1}) = \{ y : V(x_{n+1}, y) \leq \hat Q^\CP(\alpha, V_{1:n}) \} $ is the prediction set for a testing input $x_{n + 1}$
    \end{algorithmic}
   
\end{algorithm}
 
For classification, recent work has proposed conformity scores based on ordered probabilities \cite{NEURIPS2020_244edd7e,angelopoulos2021uncertainty}. The conformity score of adaptive prediction sets (APS) \cite{NEURIPS2020_244edd7e} is defined as follows. For a given input $x$, we get the sorted probabilities for all classes using the deep model $F_\theta$, $\pi(x,y^1) \geq \cdots \pi(x,y^C)$, and compute the score: 
    \begin{equation*}
        V^{\text{APS}}(x, k) =  \pi(x,y^1) + \cdots + \pi(x,y^{k-1}) + U \cdot \pi(x,y^{k})
    \end{equation*}
where $U \in [0,1]$ is a random variable to break ties. Suppose $L$ is the index of the ground-truth class label $y^*$ in the ordered list of probabilities $\pi(x,y)$. The conformity scoring function of regularized APS (RAPS) \cite{angelopoulos2021uncertainty} is: 
    \begin{equation*}
        V^{\text{RAPS}}(x, k) =  V_{APS}(x, k) + \lambda_R \cdot |L - k_{reg}|
    \end{equation*}
where $\lambda_R$ is the regularization parameter and $k_{reg}$ is another parameter which is set based on the distribution of $L$ values on validation data. 
Since \texttt{NCP} is a wrapper algorithm, we will employ both APS and RAPS to demonstrate the effectiveness of \texttt{NCP} in improving efficiency.

\vspace{1.0ex}

\noindent {\bf Problem definition.} The high-level goal of this paper is to study provable methods to improve the standard CP framework to achieve high-efficiency (small prediction set) by meeting the coverage constraint $1-\alpha$. Specifically, we propose and study the neighborhood CP algorithm that assigns higher importance to calibration examples in the local neighborhood of a given testing example. We theoretically and empirically analyze when/why neighborhood CP algorithm results in improved efficiency.  We provide a summary of the mathematical
notations used in this paper in Table \ref{tab:math_notation}.
\begin{table}[!h]
\centering
\caption{\textbf{Key mathematical notations used in this paper.}}
\label{tab:math_notation}
\begin{tabular}{|l|l|}
    \hline
     {\bf Notation} & {\bf Definition} \\
     \hline
     $x \in \mathcal{X} $ & input example \\
     \hline
     $y^* \in \mathcal{Y}$ & ground-truth output \\
     \hline
     $\mathcal{Z} = \mathcal{X} \times \mathcal{Y}$& joint space of input-output pairs \\
     \hline
     $F_{\theta}$ & neural network with parameters $\theta$\\
     \hline
     $\mathcal{C}(x)$ & prediction set for input $x$\\
     \hline
     $V((x, y^*) \in \mathcal{Z})$ & non-conformity scoring function \\
     \hline
     $\alpha$ & mis-coverage rate \\
     \hline
     $\hat{Q}(\alpha)$ & $(1 - \alpha)$ quantile of $V((x, y^*) \in \mathcal{D}_{cal}$\\
     \hline
     $\lambda_{R}$ & regularization hyper-parameter\\
     \hline
     $\lambda_{L}$ & localization hyper-parameter\\
     \hline
     $\Phi(x)$ & learned representation of $x$ from $F_{\theta}$ \\
     \hline
\end{tabular}
\end{table}

\section{Neighborhood Conformal Prediction}


In this section, we provide the details of the Neighborhood CP (\texttt{NCP}) algorithm to improve the efficiency of CP. We start with some basic notation and terminology. 

Let $V: \calZ \rightarrow \R$ denote the non-conformity measure function. For each data sample $Z_i$ in the calibration set $\calD_\cal$, we denote the corresponding non-conformity score as $V_i$ = $V(Z_i)$. For a pre-specified level of coverage $1-\alpha$, we can determine the quantile on $\calD_\cal$ as shown below.
\begin{align}\label{eq:CP}
\hat Q^\CP(\alpha, V_{1:n}) = \min\Big\{ t : \sum_{i=1}^n \frac{1}{n} \cdot \indicator[ V_i \leq t ] \geq 1 - \alpha  \Big\}
\end{align}
where $t$ is the threshold parameter, \indicator\; is the indicator function, and $n$ is the number of calibration examples, i.e., $n$=$|\calD_\cal|$. We can measure the {\it efficiency} of conformal prediction using this quantile. Given a fixed $\alpha$, a smaller quantile means a smaller threshold $t$ to achieve the same significance level $\alpha$, leading to more efficient uncertainty quantification (i.e., smaller prediction set). We provide a generic split conformal prediction algorithm in Algorithm \ref{alg:gen_CP} for completeness.

In many real-world applications, the conditional distribution of output $y$ given input $x$ can vary due to the heterogeneity in inputs. Thus, it is desirable to exploit this inherent heterogeneity during uncertainty quantification. The key idea behind the localized conformal prediction framework \cite{guan2021localized} is to assign importance weights to conformal scores on calibration examples: higher importance is given to calibration samples in the local neighborhood of a given testing example $X_{n+1}$. In contrast, standard CP assigns uniform weights to calibration examples. We summarize the \texttt{NCP} procedure in Algorithm \ref{alg:LCP}. Importantly, \texttt{NCP} is a wrapper approach in the sense that it can be used with any conformity scoring function. For example, if we use a better conformity score (say RAPS), we will get higher improvements in efficiency of UQ through \texttt{NCP} as demonstrated by our experiments.

Suppose $p_{i,j}$ denotes a weighting function that accounts for the similarity between any $i$-th and $j$-th calibration samples from $\calD_\cal$. To determine the \texttt{NCP} quantile given a mis-coverage level $\alpha$, it suffices to find $\hat \alpha^\NCP(\alpha)$ as below:

\begin{align}
    \begin{split}
        \tilde{\alpha}^{NCP}(\alpha) = \max\{ \tilde{\alpha}:\sum_{i=1}^{n}\frac{1}{n}.\indicator[V_i \leq \tilde{Q}^{NCP}(\tilde{\alpha};V_{1:n};p_{i,1:n})] \} \\ \geq 1 - \alpha
    \end{split}
    \label{eq:LCP_tilde_alpha}
\end{align}

where the \texttt{NCP} quantile is determined as follows:
\begin{align*}
{\textstyle\hat Q^\NCP(\tilde \alpha; V_{1:n}; p_{i,1:n})
=
 \min \{ t :} \sum_{j=1}^n p_{i,j} \indicator[ V_j \leq t ] \geq 1 - \tilde \alpha \}
\end{align*}

Prior work on localized CP \cite{guan2021localized} mainly focused on theoretical analysis of finite-sample coverage guarantees, synthetic experiments using raw inputs $x$ for the regression task, and acknowledged that design of localizer and weighting function for practical purposes is out of their scope. Since the efficiency of UQ from LCP critically depends on the localizer and weighting function, our \texttt{NCP} algorithm specifies an effective localizer that leverages the learned input representation $\Phi(x)$ from the pre-trained deep model $F_\theta$ (e.g., the output of the layer before the softmax layer in CNNs for image classification) for our theoretical and empirical analysis. As we show in our theoretical analysis, input representation $\Phi(x)$ exhibiting good separation between inputs from different classes will result in small prediction sets.

\begin{algorithm}[!h]
    \caption{Neighborhood Conformal Prediction (NCP) }
    \label{alg:LCP}
    \begin{algorithmic}[1]
    
    \STATE \textbf{Input}: Significance level $\alpha \in (0, 1)$. 
    Randomly split data into training set $\calD_\tr$ and calibration set $\calD_\cal = \{ Z_1,\cdots, Z_n \}$.

    \STATE If predictor $F_\theta$ is not given, train a prediction model $F_\theta$ on the training set $\calD_\tr$. 
    
    \STATE Compute non-conformity score $V_i$ for $Z \in \calD_\cal$

    \STATE Compute importance weights $p_{i, j}$ for $X_i, X_j \in \calD_\cal$ needed  to identify {\em neighborhood} according to (\ref{eq:ball_based_localizer} or \ref{localizer_sum_1})

    \STATE Find $\alpha^\NCP$ in (\ref{eq:LCP_tilde_alpha}) on $\calD_\cal$ and set 
     $\hat {\mathcal{C}}(X_{n+1}) = \{ y : V(X_{n+1}, y) \leq \hat Q(\alpha^\NCP, V_{1:n+1}, p_{n+1,1:n+1}) \} $ as the prediction set for the new data sample $X_{n + 1}$ 
    \end{algorithmic}
\end{algorithm}

\newpage

\noindent {\bf Localizer and weighting function.}  We propose a ball-based localizer function for our \texttt{NCP} algorithm as shown below.
\begin{align}\label{eq:ball_based_localizer}
p_{i,j} = \frac{ \indicator[ \Phi(X_j) \in \calB( \Phi({X_i}) ) ] }{ \sum_{k \in \calD_\cal} \indicator[ \Phi(X_k) \in \calB( \Phi(X_i) ) ] }, 
\end{align}
where $\Phi(X)$ is the learned representations of $X$ by the deep model $F_\theta$, \indicator\; is the indicator function, and $\calB(x) \triangleq \{ x' \in \calX: \| x - x'\| \leq B \}$ denotes a Euclidean ball with radius $B$ centered at $x$.
In the next section, we theoretically analyze the improved efficiency of \texttt{NCP} (Algorithm \ref{alg:LCP}) over traditional CP algorithms under some mild conditions over the learned input representation by (pre-trained) deep classifier.

In some scenarios, a Euclidean ball of fixed radius to define the neighborhood of a testing example may fail as we may not find any calibration examples in the neighborhood. Therefore, we propose the following weighting function $p_{i,j} \leftarrow {p_{i, j}^{exp}}$ over $k$ nearest neighbors of a sample. 
\begin{equation}
        IW(x_{i}, x_{j}) = \exp {\left(-\frac{dist(\Phi(x_{i}), \Phi(x_j))}{\lambda_{L}}\right)}
        \label{localizer}
\end{equation}

{
\begin{equation}
        p_{i, j}^{exp} = \frac{IW(x_{i}, x_{j})}{\sum_{k\in \calD_{cal}(KNN)}IW(x_{i}, x_{k})}
        \label{localizer_sum_1}
\end{equation}
}
where {$dist(\Phi(x_{i}),\Phi(x_{j}))$} is a distance measure to capture the dissimilarity, which is the Euclidean distance in our case; $\lambda_L$ is a tunable hyper-parameter (smaller value results in more localization); and non-zero importance weights are assigned to {\em only} the $k$ nearest neighbors, i.e., $IW(x_{i}, x_{j})$=0 if $x_j$ is {\em not} one of the $k$ nearest-neighbors of $x_i$. Our theoretical analysis is applicable to any localizer and weighting function defined in terms of $\Phi(.)$, but we employ the localizer based on Equation (\ref{localizer_sum_1}) for our experiments. Our ablation experiments (see Table 1 and Table 2) comparing \texttt{NCP} with a variant that includes all calibration examples in the neighborhood justify the appropriateness of selecting $k$ nearest neighbors.

\section{Theoretical Analysis}
\label{section:theory}

In this section, we theoretically analyze the efficiency of \texttt{NCP} when compared with standard CP. Recall that efficiency of a conformal prediction approach is related to the size of prediction set (small value means higher efficiency) for a pre-defined significance level $\alpha$. Before providing the analysis, we first give our criterion for efficiency comparison. Since the quantile increases as $\alpha$ increases, we employ the following approach for comparing \texttt{NCP} and CP. We regard the quantile value as a function of the pre-defined significance level $\alpha$.
Specifically, for CP, there is a constant quantile for all data samples, i.e., $Q^\CP(\alpha)$, while for \texttt{NCP}, the quantile is adaptive and depends on the data sample. Therefore, we use expected quantile value as the measure of efficiency of \texttt{NCP}, denoted by $\bar Q^\NCP(\alpha)$.
To directly compare the efficiency of both algorithms, we fix the same $\alpha$ and compare the two quantile functions. We first determine the quantile using CP for achieving the target significance level $\alpha$. Next, we plug this CP quantile into \texttt{NCP} to show that \texttt{NCP} can always achieve higher coverage under some conditions on the distribution of learned input representation by deep classifier and the alignment with non-conformity measures on such distribution.
The main reason for requiring synchronizing the quantile and to compare the coverage is that \texttt{NCP} may have adaptive quantile depending on the specific data sample, while CP has a fixed quantile for all samples. Hence, it is difficult to compare their quantiles given the same $\alpha$.

We assume that the data samples from feature space $\calX$ can be classified into $C$ classes.
We assume that data samples are drawn from distribution $\calD_\calX$ and the ground-truth labels are generated by a target function $F^*\footnote{We used $F^*(x) = y^* = y$.}: \calX \rightarrow [C]$.
For each $c \in [C]$, let $\calX_c = \{ x \in \calX : F^*(x) = c \}$ denote the samples with ground truth label $c$ and $P_\class^{min} = \min_{c \in [C]} \P_X\{ X \in \calX_c \}$.
Our analysis relies on the concept of neighborhood of data sample(s). 
$\calN_B(x)$ be the neighborhood of $x$, i.e., {$\calN_\calB(x) \triangleq \{ x': \calB(\Phi(x)) \cap \calB(\Phi(x')) \neq \emptyset \}$}.

We define a robust set of $F^*$ by $\calR_\calB^* \triangleq \{ x \in \calX: F^*(x) = F^*(x') , \forall x' \in \calN_\calB(x)\}$, in which all elements share the same label with its neighbors.
Below we introduce several important definitions.

\begin{definition}\label{definition:concentration_neighbor_of_robust_set}
($\sigma$-concentration of quantiles on $\calR_\calB^* \cap \calX_c)$
Let $\sigma \geq 1$ be some constant.
If for any quantile $t$ of the non-conformity scores and any $X \in \calR_\calB^* \cap \calX_c$ where $c \in [C]$, we have
\begin{align*}
\P_{X'}& \{ V(X', F^*(X')) \leq t, X' \in \calN_\calB(X) | X \in \calR_\calB^* \cap \calX_c \}
\\ &\geq 
\sigma \cdot \P_{X'} \{ V(X', F^*(X')) \leq t | X' \in \calX_c \},
\end{align*}
then the quantiles are $\sigma$-concentrated on $\calR_\calB^* \cap \calX_c$.
\end{definition}

Definition \ref{definition:concentration_neighbor_of_robust_set} states that samples satisfying any quantile of non-conformity scores $V$ are distributed {\it significantly densely} on $\calR_\calB^* \cap \calX_c$.
Considering only those samples with less than $t$ non-conformity scores and assuming they are distributed uniformly in the entire $\calX_c$, the associated condition number
$\sigma = 1$.
If the distribution of such samples can be more concentrated on $\calR_\calB^* \cap \calX_c$ rather than the non-robust set $\calX_c \backslash \calR_\calB^*$, then we have a larger condition number $\sigma > 1$.

\begin{definition}\label{definition:separation}
($\mu_\calB$-separation)
If $\max_{c \in [C]} \P_X \{X \notin \calR_\calB^*, F^*(X) = c \} \leq \mu_\calB$, then the underlying distribution $\calD_\calX$ is $\mu_\calB$-separable.
\end{definition}

Definition \ref{definition:separation} states that the region of robust set is lower bounded for each class, i.e., $\P_X \{ X \in \calR_\calB^* , F^*(X) = c \} \geq 1 - \mu_\calB$.
A smaller condition number $\mu_\calB$ means that more percentage of data samples are distributed in the robust set.
The above definitions are built on population distribution rather than training set, as the expansion condition in \cite{wei2020theoretical}.

\begin{assumption}\label{assumption:LCP}
Suppose $\alpha$ is the pre-defined significance level and:
(i) any quantile $t$ satisfies $\sigma$-concentration on $\calR_\calB^* \cap \calX_c$ for $c \in [C]$;
(ii) $\calD_\calX$ is $\mu_\calB$ separable;

(iii) the condition numbers in (i) and (ii) satisfy 
$(1-\mu_\calB) \sigma \geq ( 1 - \alpha ) / ( 1 - \alpha ( 2 - P_\class^{min} ) )$;
(iv) $\alpha\leq 1/2$.
\end{assumption}

We make assumptions for any quantile $t$ of non-conformity scores at the population level.
Given the above definition of the neighborhood $\calN_\calB$, $\mu_\calB$ implicitly describes the difficulty of the classification problem.

The assumption on the concentration of quantiles requires that the non-conformity scores $V$ are consistent with the underlying distribution of the input representation $\calD_\calX$.
Using the concepts of robust set $\calR_\calB^*$ and its neighborhood region $\calN_\calB(\calR_\calB^*)$ (or its class-wise component $\calN_\calB(\calR_\calB^* \cap \calX_c)$ as in Definition \ref{definition:concentration_neighbor_of_robust_set}), 
we can partition the entire sample space $\calX$ into two disjoint parts: {\it core part} (robust set), and the {\it remaining part} (outside the robust set).
Good input representation $\calX$ will achieve good separation of data clusters for different classes, i.e., the above condition number $(1-\mu_\calB)\sigma$ can be larger than $1$.

\begin{theorem}\label{theorem:improved_LCP_over_CP}
Suppose Assumption \ref{assumption:LCP} holds.

Let
$Q^\CP(\alpha) \triangleq \min\{ t : \P_X \{ V(X, F^*(X)) \leq t \} \geq 1 - \alpha \}$,
$\alpha^\NCP(\alpha) \triangleq \max\{ \tilde\alpha : \P_X \{ X \leq Q^\NCP(\tilde \alpha; X)\} \geq 1 - \alpha \}$,
where $Q^\NCP(\tilde \alpha; X) \triangleq \min\{ t : \P_{X'}\{ V(X'; F^*(X)) \leq t, X' \in \calN_\calB(X) \} \geq 1 - \tilde \alpha \}$ in population.

Then to achieve the same $\alpha$, \texttt{NCP} gives smaller expected quantile and can be more efficient than CP:

$
\E_{X} [ Q^\NCP(\alpha; X) ] \leq Q^\CP(\alpha).
$
\end{theorem}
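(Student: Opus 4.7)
The plan is to reduce the expected-quantile inequality to a pointwise comparison $Q^\NCP(\alpha; X) \leq Q^\CP(\alpha)$ on the robust set $\calR_\calB^*$, and then to control the residual expectation over $\calX \setminus \calR_\calB^*$ using the $\mu_\calB$-separation and Assumption~\ref{assumption:LCP}(iii)--(iv). The first move is to observe that $Q^\NCP(\tilde\alpha; X)$ is monotone non-increasing in $\tilde\alpha$, so establishing $Q^\NCP(\alpha; X) \leq Q^\CP(\alpha)$ is equivalent to verifying that the global CP threshold $t = Q^\CP(\alpha)$ already achieves local coverage at $X$:
\begin{equation*}
\P_{X'}\{V(X', F^*(X)) \leq Q^\CP(\alpha),\ X' \in \calN_\calB(X)\} \geq 1-\alpha.
\end{equation*}
This reduction turns the quantile comparison into a coverage-comparison problem that can be attacked directly using the structural assumptions.

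The main step is to verify this local-coverage inequality for every $X \in \calR_\calB^* \cap \calX_c$. Because $X$ is robust, every $X' \in \calN_\calB(X)$ satisfies $F^*(X') = F^*(X)$, hence $V(X', F^*(X)) = V(X', F^*(X'))$ on the neighborhood. Plugging $t = Q^\CP(\alpha)$ into Definition~\ref{definition:concentration_neighbor_of_robust_set} lower bounds the local coverage by $\sigma \cdot p_c(Q^\CP(\alpha))$, where $p_c(t) \triangleq \P_{X'}\{V(X', F^*(X')) \leq t \mid X' \in \calX_c\}$. I would then convert the marginal CP guarantee $\sum_c \P(\calX_c)\,p_c(Q^\CP(\alpha)) \geq 1-\alpha$ into a worst-case class-conditional lower bound using $\P(\calX_c) \geq P_\class^{min}$. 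Assumption~\ref{assumption:LCP}(iii) is calibrated so that the product of this worst-case $p_c$ with $\sigma$ (discounted by $(1-\mu_\calB)$) exceeds $1-\alpha$, which gives exactly the local-coverage inequality we need on $\calR_\calB^*$.

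For $X \notin \calR_\calB^*$, the constancy-of-labels argument fails and $Q^\NCP(\alpha;X)$ may exceed $Q^\CP(\alpha)$. I would handle this by decomposing
\begin{equation*}
\E_X[Q^\NCP(\alpha; X)] = \E_X[Q^\NCP(\alpha; X)\,\mathbf{1}\{X \in \calR_\calB^*\}] + \E_X[Q^\NCP(\alpha; X)\,\mathbf{1}\{X \notin \calR_\calB^*\}],
\end{equation*}
bounding the first term by $\P(\calR_\calB^*)\cdot Q^\CP(\alpha)$ via the pointwise inequality above, and bounding the second by a uniform upper bound on the non-conformity scores (finite for APS/RAPS). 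Definition~\ref{definition:separation} controls the measure of the bad set by $\mu_\calB$ per class, and Assumption~\ref{assumption:LCP}(iii)--(iv) is precisely what forces the residual contribution to be absorbed by the slack on the robust part, yielding $\E_X[Q^\NCP(\alpha;X)] \leq Q^\CP(\alpha)$.

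The main obstacle is the algebraic bookkeeping in Step~2: sharpening the marginal CP guarantee into a class-conditional statement that is tight enough to pair with the $\sigma$-concentration factor without leaving slack. Getting the worst-case class-wise bound on $p_c(Q^\CP(\alpha))$ to match the particular form $1-\alpha(2-P_\class^{min})$ appearing in Assumption~\ref{assumption:LCP}(iii)---and verifying that $\alpha \leq 1/2$ is indeed needed to keep the relevant denominators positive and the combined inequality monotone in the right direction---is the delicate part; any looseness here would force a strictly stronger assumption than the one stated.
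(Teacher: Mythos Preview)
Your approach is structurally different from the paper's and, as written, has two genuine gaps that prevent it from closing under Assumption~\ref{assumption:LCP} as stated.

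\textbf{Gap 1: the class-conditional lower bound does not match the assumption.} From the marginal CP guarantee $\sum_c \P(\calX_c)\,p_c(Q^\CP(\alpha)) \geq 1-\alpha$ together with $p_c \leq 1$, the sharpest lower bound on an individual $p_c$ is $p_c \geq 1 - \alpha/\P(\calX_c) \geq 1 - \alpha/P_\class^{min}$, not $1-\alpha(2-P_\class^{min})$. Since $1/P_\class^{min} > 2-P_\class^{min}$ whenever $P_\class^{min} < 1$, your route would require $\sigma$ (or $(1-\mu_\calB)\sigma$) to exceed $(1-\alpha)/(1-\alpha/P_\class^{min})$, which is strictly stronger than Assumption~\ref{assumption:LCP}(iii). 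You flagged this algebra as ``the delicate part,'' and indeed it does not go through: the quantity $2-P_\class^{min}$ does not arise from any worst-case class bound on $p_c$. In the paper it comes from a different mechanism in Step~2 --- partitioning the \emph{classes} into a ``robust class set'' $R_\class(\tilde\alpha) = \{c : Q^\NCP_\class(\tilde\alpha;c) \leq Q^\CP(\tilde\alpha)\}$ and its complement, and using that $\P\{c \in R_\class\} \leq 1-P_\class^{min}$ when $R_\class \neq [C]$.

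\textbf{Gap 2: the non-robust set cannot be absorbed.} Your decomposition yields at best
\[
\E_X[Q^\NCP(\alpha;X)] \leq \P(\calR_\calB^*)\,Q^\CP(\alpha) + \P(\calX\setminus\calR_\calB^*)\cdot M,
\]
with $M$ a uniform score bound. This is $\leq Q^\CP(\alpha)$ only if $M \leq Q^\CP(\alpha)$, which is not guaranteed. The pointwise inequality you establish on $\calR_\calB^*$ carries no quantifiable slack in the \emph{quantile scale} (only in the coverage scale), so there is nothing to trade against the $M$-term. The paper avoids this entirely: it never splits the expectation over robust/non-robust samples. Instead, the $(1-\mu_\calB)$ factor enters in Step~1 inside the bound on the \emph{coverage probability} $G^\NCP(t;X) \geq (1-\mu_\calB)\sigma\,G^\NCP_\class(t;F^*(X))$, which via Lemma~\ref{lemma:decreased_quantile} translates directly into a comparison of expected quantiles between NCP and an artificial \emph{class-wise NCP}. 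Step~2 then compares class-wise NCP to CP (this is where the $2-P_\class^{min}$ factor appears as a loss), and Step~3 checks that the gain from Step~1 dominates the loss from Step~2 precisely under Assumption~\ref{assumption:LCP}(iii)--(iv). The intermediary class-wise NCP is the missing idea in your plan.
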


\begin{figure*}[t]
    \centering
    \includegraphics[width=\linewidth]{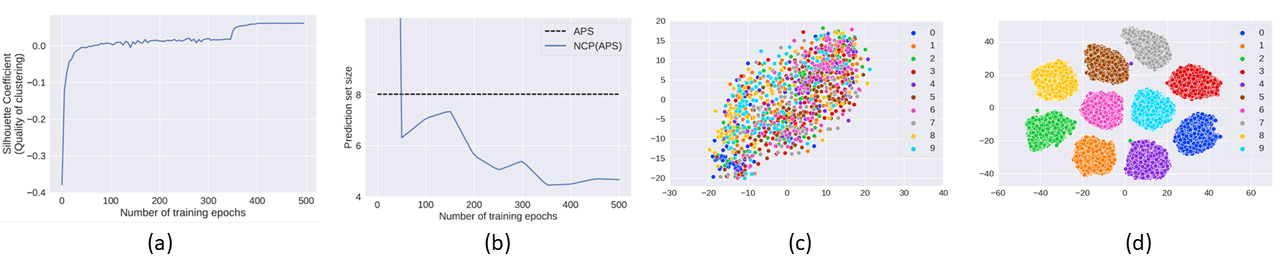}
    \caption{Ablation results to justify \texttt{NCP} algorithm and its theoretical analysis. ({\bf Left}) As a function of the number of training epochs for ResNet50  trained on CIFAR100, we show (a) the clustering property of learned input representation and (b) prediction set size from \texttt{NCP}. The conformity score is kept constant for both CP and \texttt{NCP}. The prediction set size from \texttt{NCP} reduces over CP as the clustering property of the learned representation improves. ({\bf Right}) the t-SNE visualization of CIFAR10 data categorized by their ground-truth class labels using (c) raw image pixels and (d) learned input representation from ResNet50 model after 500 training epochs. The learned representation exhibits significantly a better clustering property.}
    \label{fig:t_SNE_all}
\end{figure*}

{\bf Proof.} (Sketch) We present the complete proof in Appendix due to space 
constraints. Our theoretical analysis proceeds with two key milestones. 
In the first step, we establish a connection from \texttt{NCP} to an artificial CP algorithm referred to as {\it class-wise \texttt{NCP}} that determines the threshold for each class independently. The gap between \texttt{NCP} and class-wise \texttt{NCP} can be filled by using the assumed $\sigma$-concentration and $\mu_\calB$-separation, since both conditions characterize the properties of input representation along different dimensions.
Our analysis shows a clear improvement of efficiency by \texttt{NCP}, i.e., smaller expected quantile over the data distribution when compared to class-wise \texttt{NCP}.
In the second step, we build a link between class-wise \texttt{NCP} with traditional CP.
Even though they can both achieve marginal coverage, class-wise \texttt{NCP} finds the quantiles for individual classes {\em adaptively} (i.e., quantiles may vary for different classes). However, traditional CP can only determine a {\em global} quantile across the entire data distribution. Finally, we combine these two milestones to connect \texttt{NCP} to traditional CP to show improved efficiency. 

\vspace{0.25ex}

\begin{remark}
\normalfont The above result treats the quantiles determined by \texttt{NCP} and CP as a function of $\alpha$.
When achieving the same $\alpha$, the expected quantile derived by \texttt{NCP} is smaller than the one determined by CP.
The key idea behind this result is to make use of the alignment between the quantiles and distribution, 
which critically depends on the learned input representation of $\calX$ by deep classifiers. Learned input representations with a good clustering property will reduce the prediction set size.
The sample distribution is characterized by the robust set $\calR_\calB^*$ and the remaining region.
Additionally, if the non-conformity scores are sufficiently consistent with data distribution in any quantiles, then a well-designed localizer can reduce the required quantile to achieve $\alpha$, the target significance level.
\end{remark}


\section{Experiments and Results}

We present experimental evaluation of \texttt{NCP} over prior CP methods for classification and other baselines to demonstrate the effectiveness of \texttt{NCP} in reducing prediction set size.

\subsection{Experimental Setup}

\vspace{0.75ex}

\noindent {\bf Classification datasets.} We consider CIFAR10 \cite{krizhevsky2009learning}, CIFAR100 \cite{krizhevsky2009learning}, and ImageNet \cite{deng2009imagenet} datasets using the standard training and testing set split. We employ the same methodology as \cite{angelopoulos2021uncertainty} to create calibration data and validation data for tuning hyper-parameters.

\vspace{1.0ex}
\noindent {\bf Regression datasets.} We consider eleven UCI benchmark datasets from five different sources: Blog feedback (BlogFeedback) \cite{akbilgic2014novel}, 
Gas turbine NOx emission (Gas turbine) \cite{kaya2019predicting}, five Facebook comment volume (Facebook\_$i$) \cite{Sing1601:Facebook}, three medical datasets(MEPS\_$i$)\cite{MEPS_data}, and Concrete\cite{yeh1998modeling}. We employ a similar dataset split as the classification task.
\vspace{1.0ex}

\noindent {\bf Deep models} \textit{Classification:} We train three commonly used ResNet architectures, namely,  ResNet18, ResNet50, and ResNet101 \cite{he2016deep} on CIFAR10 and CIFAR100. For ImageNet, we employ nine pre-trained deep models similar to \cite{angelopoulos2021uncertainty}, namely, ResNeXt101, ResNet152, ResNet101, ResNet50, ResNet18, DenseNet161, VGG16, Inception, and ShuffleNet from the TorchVision repository \cite{paszke2019pytorch}. We calibrate 
classifiers via Platt scaling \cite{guo2017calibration} on the calibration data before applying CP methods.

\textit{Regression: }We consider a multi-layer perceptron with three hidden layers with respectively 15, 20, and 30 hidden nodes. We train this neural network for different datasets by optimizing the Mean Squared Error loss. We employ Adam optimizer\cite{kingma2014adam} and train for 500 epochs with a learning rate $10^{-3}$.
\begin{table}[t]
\centering
\caption{The average percentage of calibration examples used by \texttt{NCP} as nearest neighbors  to produce prediction sets.}

\label{tab:K_fraction_ncp}
\begin{tabular}{|c|c|c|c|}

\hline
Model     & ImageNet & CIFAR100 & CIFAR10 \\ \hline
ResNet18  & 24.8     & 12.3     & 09.0     \\ \hline
ResNet50  & 12.8     & 08.3      & 15.0    \\ \hline
ResNet101 & 33.2     & 12.0     & 13.0    \\ \hline
\end{tabular}
\end{table}
\vspace{1.0ex}

\noindent {\bf Methods and baselines.} \textit{Classification:} We consider the conformity score of \texttt{APS} \cite{romano2020classification} and \texttt{RAPS} \cite{angelopoulos2021uncertainty} as strong CP baselines. Since \texttt{NCP} is a wrapper algorithm that can work with any conformity score, our experiments will demonstrate the efficacy of \texttt{NCP} to improve over both APS and RAPS, namely, \texttt{NCP(APS)} and \texttt{NCP(RAPS)}. We employ the publicly available implementation\footnote{\url{https://github.com/aangelopoulos/conformal_classification/}} of APS and RAPS, and build on it to implement our \texttt{NCP} algorithm. We also compare with a \texttt{naive} baseline \cite{angelopoulos2021uncertainty} that produces a prediction set by including classes from highest to lowest probability until their cumulative sum just exceeds the threshold $1 - \alpha$. 

\textit{Regression:} We consider the Absolute Residual\cite{vovk2005algorithmic} as a non-conformity score:
\begin{equation}
    V(x,y^*)=|y^* - F_{\theta}(x)|
\end{equation} 
to create a baseline \texttt{CP} for the regression task using Algorithm \ref{alg:gen_CP} and we compare it to our proposed \texttt{NCP} method.

\vspace{1.0ex}

\noindent {\bf Evaluation methodology.} We select the hyper-parameters for RAPS and \texttt{NCP} from the following values noting that Bayesian optimization \cite{BO1,BO2} can be used for improved results: $\lambda_{L} = \{10, 50, 100, 500, 1000, 5000\}$ and $\lambda_{R} = \{0.001, 0.005, 0.01, 0.05, 0.15, 0.2, 0.3, 0.4, 0.5, 1.0\}$. 
We create calibration and validation data following the methodology in \cite{angelopoulos2021uncertainty} and use validation data for tuning the hyper-parameters. We present all our experimental results for desired coverage as 90\% (unless specified otherwise). 
We compute two metrics computed on the testing set: {\em Coverage} (fraction of testing examples for which prediction set contains the ground-truth output) and {\em Efficiency} (average length of cardinality of prediction set for classification and average length of the prediction interval for regression) where small values mean high efficiency. We report the average metrics over five different runs for ImageNet and ten different runs for all other datasets.

\begin{table*}[t]
\centering
\caption{Ablation results comparing \texttt{NCP} and \texttt{NCP} variant using all calibration examples as neighborhood \texttt{(NCP-All)}. \texttt{NCP} reduces the mean prediction set size by using a smaller neighborhood. Both \texttt{NCP} and \texttt{NCP-All} achieve $1 - \alpha$ coverage.}
\label{tab:LCP_NCP}
\begin{tabular}{|c|cc|cc|cc|}
\hline
\multicolumn{1}{|c|}{Dataset}   & \multicolumn{2}{c|}{ImageNet}             & \multicolumn{2}{c|}{CIFAR100}             & \multicolumn{2}{c|}{CIFAR10}              \\ \hline
                      $1 - \alpha$          & \multicolumn{2}{c|}{$0.90$} & \multicolumn{2}{c|}{$0.90$} & \multicolumn{2}{c|}{$ 0.96$} \\ \hline
\multicolumn{1}{|c|}{}
                                & \multicolumn{1}{c|}{NCP-All}       & NCP      & \multicolumn{1}{c|}{NCP-All}       & NCP      & \multicolumn{1}{c|}{NCP-All}       & NCP      \\ \hline
\multicolumn{1}{|c|}{ResNet18}  & \multicolumn{1}{c|}{11.82}     & \textbf{11.08}    & \multicolumn{1}{c|}{4.87}      & \textbf{3.61}     & \multicolumn{1}{c|}{1.22}      & \textbf{1.07}     \\ \hline
\multicolumn{1}{|c|}{ResNet50}  & \multicolumn{1}{c|}{09.24}      & \textbf{06.44}     & \multicolumn{1}{c|}{7.45}      & \textbf{4.26}     & \multicolumn{1}{c|}{1.16}      & \textbf{1.05}     \\ \hline
\multicolumn{1}{|c|}{ResNet101} & \multicolumn{1}{c|}{07.90}       & \textbf{05.60}      & \multicolumn{1}{c|}{4.17}      & \textbf{2.80}     & \multicolumn{1}{c|}{1.18}      & \textbf{1.04}     \\ \hline
\end{tabular}
\end{table*}

\begin{table*}[!h]
\centering
\caption{\textbf{Mean prediction set size} on ImageNet, CIFAR100, and CIFAR10. We present the mean and standard deviation over five different runs for ImageNet and ten different runs for CIFAR100 and CIFAR10 respectively.} 
\label{tab:CIFAR100_size_mod}
\begin{tabular}{|c|c|c|c|c|c|}
\hline
Model & Naive & APS & RAPS & NCP(APS) & NCP(RAPS)\\ \hline
\multicolumn{6}{|c|}{ImageNet($1 - \alpha = 0.900$)} \\ \hline
ResNet152 &  10.55(0.548) & 11.34(0.843) & 2.53(0.054) & 5.24(0.312) & \textbf{2.12(0.007)} \\ \hline
ResNet101 &  10.85(0.588) & 11.36(0.615) & 2.63(0.08) & 5.60(0.278) & \textbf{2.25(0.098)} \\ \hline
ResNet50  & 12.00(0.530) & 13.24(0.892) & 2.94(0.089) & 6.44(0.381) & \textbf{2.54(0.091)} \\ \hline
ResNet18  & 16.13(0.642) & 17.05(1.100) & 5.00(0.220) & 11.08(0.598) & \textbf{4.71(0.251)} \\ \hline
ResNeXt101  & 18.57(1.05) & 20.57(1.10) & 2.36(0.069) & 8.24(0.388) & \textbf{2.01(0.060)} \\ \hline
DenseNet161  & 11.70(0.730) & 12.86(1.21) & 2.67(0.074) & 5.89(0.447) & \textbf{2.29(0.103)} \\ \hline
VGG16  & 13.91(0.867) & 14.10(0.486) & 3.97(0.098) & 8.56(0.382) & \textbf{3.62(0.106)} \\ \hline
Inception & 77.51(3.78) & 90.28(4.16) & 5.96(0.373) & 66.29(2.83) & \textbf{5.67(0.178)} \\ \hline
ShuffleNet  & 30.33(1.64) & 34.31(2.10) & 5.53(0.070) & 22.36(1.37) & \textbf{5.49(0.131)} \\ \hline
\multicolumn{6}{|c|}{CIFAR100 ($1 - \alpha = 0.900$)} \\ \hline
ResNet18 &   5.07(0.327) & 5.57(0.224)  &3.17(0.076) & 3.61(0.164) & \textbf{2.77(0.100)}    \\ \hline
ResNet50   & 8.21(0.746) & 8.02(0.405) & 3.37(0.189) & 4.26(0.233) & \textbf{2.96(0.213)}    \\ \hline
ResNet101   & 4.44(0.348) & 4.64(0.184) & 2.60(0.066) & 2.80(0.188) & \textbf{2.19(0.073)} \\ \hline
\multicolumn{6}{|c|}{CIFAR10($1 - \alpha = 0.960$)} \\ \hline
ResNet18 &  1.20(0.016) & 1.25(0.021) & 1.24(0.014) & 1.07(0.013) & \textbf{1.06(0.012)}\\ \hline
ResNet50   & 1.15(0.013) & 1.19(0.021) & 1.17(0.011) & 1.05(0.010) & \textbf{1.04(0.005)}\\ \hline
ResNet101  & 1.17(0.017) & 1.20(0.018) & 1.19(0.017) & 1.04(0.008) & \textbf{1.03(0.007)}\\ \hline

\end{tabular}
\end{table*}

\subsection{Results and Discussion}

\vspace{0.75ex}

\noindent {\bf Empirical support for the theory of \texttt{NCP}.} Figure \ref{fig:t_SNE_all} (a) and (b) shows that as the clustering property of the learned input representation improves, prediction set size from \texttt{NCP(APS)} reduces and improves over APS. This empirical result clearly demonstrates our theoretical result comparing \texttt{NCP} and CP. Figure \ref{fig:t_SNE_all} (c) and (d) show the t-SNE visualization of raw image pixels and learned input representation from ResNet50 on CIFAR10 data. This result demonstrates that learned representation exhibits significantly better clustering property over raw data and justifies its use for the \texttt{NCP}'s localizer.

\vspace{0.75ex}

\noindent {\bf Ablation results for \texttt{NCP}.} Table \ref{tab:K_fraction_ncp} shows the fraction of calibration examples used by \texttt{NCP(APS)} on an average to compute prediction sets for three ResNet models noting that we find similar patterns for other deep models. These results demonstrate that a small fraction of calibration examples are used by \texttt{NCP(APS)} as neighborhood. One interesting ablation question is how does the \texttt{NCP} variant using all calibration examples compare to \texttt{NCP}? To answer this question, Table \ref{tab:LCP_NCP} shows the mean prediction set size for \texttt{NCP(APS)-All} and \texttt{NCP(APS)}. The results clearly demonstrate that \texttt{NCP(APS)} produces significantly smaller predicted set sizes justifying the selection of $k$ nearest neighbors in \texttt{NCP}. Finally, we provide in the Appendix results comparing the runtime of NCP and the corresponding CP baseline. Results demonstrate the negligible runtime overhead of employing NCP as a wrapper with any conformity scoring function.

\begin{table*}[!h]
\centering
\caption{\textbf{Marginal coverage} on ImageNet, CIFAR100, and CIFAR10. We present the mean and standard deviation over five different runs for ImageNet and ten different runs for CIFAR100 and CIFAR10 respectively.}
\label{tab:CIFAR100_cvg_mod}
\begin{tabular}{|c|c|c|c|c|c|}
\hline
Model& Naive & APS & RAPS & NCP(APS) & NCP(RAPS)\\ \hline
\multicolumn{6}{|c|}{ImageNet($1 - \alpha = 0.900$)} \\ \hline
ResNet152  & 0.937(0.002) & 0.940(0.003) & 0.922(0.003) & 0.905(0.005) & 0.904(0.007) \\ \hline
ResNet101  & 0.936(0.002) & 0.938(0.002) & 0.919(0.003) & 0.903(0.004) & 0.905(0.008) \\ \hline
ResNet50  & 0.934(0.001) & 0.938(0.003) & 0.918(0.004) & 0.902(0.005) & 0.904(0.007) \\ \hline
ResNet18  & 0.925(0.003) & 0.928(0.004) & 0.910(0.003) & 0.901(0.006) & 0.903(0.006) \\ \hline
ResNeXt101 & 0.935(0.001) & 0.939(0.002) & 0.920(0.003) & 0.905(0.003) & 0.903(0.006) \\ \hline
DenseNet161 & 0.934(0.001) & 0.937(0.003) & 0.919(0.003) & 0.902(0.007) & 0.904(0.007) \\ \hline
VGG16  & 0.929(0.003) & 0.930(0.003) & 0.912(0.003) & 0.906(0.004) & 0.904(0.004) \\ \hline
Inception & 0.919(0.002) & 0.927(0.004) & 0.909(0.004) & 0.910(0.003) & 0.905(0.003) \\ \hline
ShuffleNet & 0.927(0.003) & 0.932(0.004) & 0.907(0.002) & 0.911(0.005) & 0.907(0.003) \\ \hline
\multicolumn{6}{|c|}{CIFAR100 ($1 - \alpha = 0.900$)} \\ \hline
ResNet18 & 0.931(0.005) & 0.938(0.004) & 0.925(0.003) & 0.908(0.007) & 0.909(0.006)    \\ \hline
ResNet50 & 0.947(0.006) & 0.944(0.005) & 0.919(0.004) & 0.906(0.008) & 0.906(0.011)\\ \hline
ResNet101 & 0.940(0.005) & 0.944(0.004) & 0.926(0.004) & 0.907(0.008) & 0.907(0.006)\\ \hline 
\multicolumn{6}{|c|}{CIFAR10($1 - \alpha = 0.960$)} \\ \hline
ResNet18 & 0.979(0.002) & 0.982(0.002) & 0.982(0.002) & 0.962(0.005) & 0.961(0.004) \\ \hline
ResNet50 & 0.981(0.003) & 0.984(0.003) & 0.984(0.003) & 0.965(0.003) & 0.963(0.003) \\ \hline
ResNet101 & 0.983(0.002) & 0.986(0.002) & 0.985(0.003) & 0.966(0.002) & 0.964(0.002) \\ \hline
\end{tabular}
\end{table*}

\vspace{1.0ex}

\noindent {\bf \texttt{NCP} vs. Baseline methods for classification.} Table \ref{tab:CIFAR100_size_mod} and Table \ref{tab:CIFAR100_cvg_mod} show respectively the results for efficiency (average prediction set size) and marginal coverage for ImageNet, CIFAR100, and CIFAR10 datasets respectively for classification. We discuss all these results along different dimensions and make the following observations. {\bf 1)} The prediction set size is significantly reduced by \texttt{NCP(APS)} for both ImageNet and CIFAR100, compared to the APS and Naive algorithms. The prototypical results shown in Figure \ref{fig:t_SNE_all} demonstrate that the prediction set size from \texttt{NCP} is directly proportional to the clustering property of the learned input representation. {\bf 2)} \texttt{NCP(RAPS)} reduces the prediction set size over APS, RAPS, and Naive baselines. These results demonstrate that \texttt{NCP} being a wrapper algorithm can make use of a better conformity scoring function (i.e., RAPS) to further reduce the prediction set size. {\bf 3)} The reduction in size of prediction set by \texttt{NCP(APS)} and \texttt{NCP(RAPS)} is achieved by further tightening the gap between actual coverage and the desired coverage $1-\alpha$. {\bf 4)} Naive baseline achieves the desired coverage, but the size of its prediction set is significantly larger than all CP methods.

\vspace{1.0ex}

\noindent {\bf \texttt{NCP} vs. Baseline methods for regression.} Table \ref{Regression_intervals} and Table \ref{Regression_coverage}  show respectively the results for efficiency (average prediction interval length) and the marginal coverage for the eleven regression datasets. We make similar observations as in the classification task. {\bf 1)} The prediction interval length is significantly reduced by \texttt{NCP} for all datasets. {\bf 2)} The reduction in length of the prediction interval by \texttt{NCP} is achieved by further tightening the gap between actual coverage and the desired coverage $1-\alpha$.

\begin{table}[t]
\centering
\caption{\textbf{Mean prediction interval size} on Regression datasets. We present the mean and standard deviation over ten different runs.}
\label{Regression_intervals}
\begin{tabular}{|l|l|l|}
\hline
Dataset     & CP          & NCP      \\ \hline
BlogFeedback   & 44.79(2.78) & 18.97(2.42)  \\ \hline
Gas turbine & 16.69(1.47) & 12.94(1.11)  \\ \hline
Facebook\_1 & 50.96(5.72) & 21.89(7.27)  \\ \hline
Facebook\_2 & 44.12(3.85) & 21.28(6.55)  \\ \hline
Facebook\_3 & 46.98(4.01) & 39.73(9.59)  \\ \hline
Facebook\_4 & 46.78(5.35) & 34.01(11.76) \\ \hline
Facebook\_5 & 45.35(0.65) & 43.73(5.55)  \\ \hline
MEPS\_19    & 54.99(2.64) & 35.63(2.54)  \\ \hline
MEPS\_20    & 48.85(2.22) & 33.85(3.53)  \\ \hline
MEPS\_21    & 61.46(7.49) & 36.01(5.10)  \\ \hline
Concrete    & 59.27(0.23) & 50.42(0.62)  \\ \hline
\end{tabular}
\end{table}

\begin{table}[t]
\centering
\caption{\textbf{Marginal coverage} on Regression task with $1-\alpha=0.90$. We present the mean and standard deviation (negligible) over ten different runs for 11 different data sets.}
\label{Regression_coverage}
\begin{tabular}{|l|l|l|}
\hline
Dataset     & CP    & NCP \\ \hline
BlogFeedback   & 0.952 & 0.909   \\ \hline
Gas turbine & 0.957 & 0.911   \\ \hline
Facebook\_1 & 0.951 & 0.912   \\ \hline
Facebook\_2 & 0.951 & 0.917   \\ \hline
Facebook\_3 & 0.950 & 0.941   \\ \hline
Facebook\_4 & 0.952 & 0.928   \\ \hline
Facebook\_5 & 0.949 & 0.946   \\ \hline
MEPS\_19    & 0.948 & 0.916   \\ \hline
MEPS\_20    & 0.953 & 0.923   \\ \hline
MEPS\_21    & 0.948 & 0.914   \\ \hline
Concrete    & 0.973 & 0.910   \\ \hline
\end{tabular}
\end{table}

\section{Related Work}

Conformal prediction (CP) is a general framework for uncertainty quantification that provides (marginal) coverage guarantees without any assumptions on the underlying data distribution \cite{vovk1999machine,vovk2005algorithmic,shafer2008tutorial}. Various instantiations of the CP framework are studied to produce prediction intervals for regression \cite{Papadopoulos08,vovk2012conditional,vovk2015cross,lei2014distribution,vovk2018cross,lei2018distribution,romano2019conformalized,izbicki2019flexible,guan2019conformal,gupta2022nested,kivaranovic2020adaptive,barber2021predictive,foygel2021limits}, and prediction sets for multi-class classification \cite{lei2013distribution,sadinle2019least,romano2020classification,angelopoulos2021uncertainty} and structured prediction \cite{bates2021distribution}. 

This paper focuses on split conformal prediction \cite{Papadopoulos08,lei2018distribution}, which uses a set of calibration examples to provide uncertainty quantification for any given predictor including deep models. Our problem setup considers CP methods to reduce prediction set size under a given marginal coverage constraint. There is relatively less work on CP methods for classification when compared to regression. The adaptive prediction sets (APS) method \cite{romano2020classification} and its regularized version (referred to as RAPS) \cite{angelopoulos2021uncertainty} design conformity scoring functions to return smaller prediction sets. However, neither APS nor RAPS were analyzed theoretically to characterize the reasons for its effectiveness. Our \texttt{NCP} algorithm is a wrapper approach that is complementary to both APS and RAPS as we demonstrated in experiments.  

Localized CP \cite{guan2021localized} was proposed to improve CP by putting more emphasis on the local neighborhood of testing examples and was only evaluated on synthetic regression tasks.  Recent work \cite{lin2021locally} applied LCP for real-world regression tasks with good experimental results. {\em However, there is no existing theoretical analysis to characterize the precise conditions for reduced prediction intervals or prediction sets.} In fact, this analysis is not possible without defining a concrete localizer function. The proposed neighborhood CP algorithm specifies an effective localizer function by leveraging the input representations learned by deep models, develops theory to characterize when and why neighborhood CP produces smaller prediction sets for classification problems, and performs empirical evaluation on real-world classification datasets using diverse deep models to demonstrate the effectiveness of this simple approach.

There are also other approaches which are not based on conformal prediction to produce prediction sets with small sizes for regression \cite{pearce2018high,chen2021learning} and classification \cite{park2019pac} tasks. Since the focus of this paper is on improving CP based uncertainty quantification, these methods are out of scope for our study.

\section{Summary}

This paper studied a novel neighborhood conformal prediction (\texttt{NCP}) algorithm to improve uncertainty quantification of pre-trained deep classifiers. 
For a given testing input, \texttt{NCP} identifies $k$ nearest neighbors and assigns importance weights proportional to their distance defined using the learned representation of deep classifier. The theoretical analysis characterized why \texttt{NCP} reduces the prediction set size over standard CP framework. Our experimental results corroborated the developed theory and demonstrated significant reduction in prediction set size over prior CP methods on diverse classification benchmarks and deep models. 

\section*{Acknowledgments} This research is supported in part by Proofpoint Inc. and the AgAID AI Institute for Agriculture Decision Support, supported by the National Science Foundation and United States Department of Agriculture - National Institute of Food and Agriculture award \#2021-67021-35344. The authors would like to thank the feedback from anonymous reviewers who provided suggestions to improve the paper.

\small
\bibliography{reference}

\clearpage

\onecolumn


\section{Experimental and Implementation Details}

\begin{table}[!h]
\centering
\caption{The size of different splits for each classification data set.}
\label{tab:data_details}
\begin{tabular}{|c|c|c|c|c|}
\hline
Dataset & Calibration data & Scaling data & Validation data & Testing data\\ 
 & &  & &\\ \hline
ImageNet & 5000 & 5000 & 15000 & 25000\\ \hline
CIFAR100 & 3000 & 1000 & 3000 & 3000\\ \hline
CIFAR10 & 3000 & 1000 & 3000 & 3000\\ \hline

\end{tabular}
\end{table}

\section{Neighborhood Conformal Prediction (NCP) vs. Conformal Prediction (CP)}

\vspace{2.0ex}

\subsection{Proof of Theorem \ref{theorem:improved_LCP_over_CP} }
\label{section:proof_improved_LCP_over_CP}

Before proving Theorem \ref{theorem:improved_LCP_over_CP}, we present the following technical lemma.
\begin{lemma}
\label{lemma:decreased_quantile}
(Increase of quantile in decreasing probability function of threshold $t$)
Suppose $G_0, G_1$ are monotonically non-decreasing functions.
Given $t$, if $G_0(t) \geq G_1(t)$ and $\alpha \in (0, 1)$,
we have
\begin{align*}
\min\{ t : G_0(t) \geq 1 - \alpha \}
\leq
\min\{ t : G_1(t) \geq 1 - \alpha \}.
\end{align*}
\end{lemma}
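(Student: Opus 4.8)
\textbf{Proof proposal for Lemma \ref{lemma:decreased_quantile}.}

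The plan is to show directly that the minimizing threshold $t$ for $G_0$ is no larger than the minimizing threshold for $G_1$, using only the pointwise domination $G_0(t) \geq G_1(t)$ and the fact that both functions are monotonically non-decreasing. Write $t_0 = \min\{ t : G_0(t) \geq 1 - \alpha \}$ and $t_1 = \min\{ t : G_1(t) \geq 1 - \alpha \}$. First I would note that $t_1$ is by definition a threshold at which $G_1(t_1) \geq 1 - \alpha$. By the assumed domination, $G_0(t_1) \geq G_1(t_1) \geq 1 - \alpha$, so $t_1$ belongs to the set $\{ t : G_0(t) \geq 1 - \alpha \}$ over which $t_0$ is the minimum. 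Hence $t_0 \leq t_1$, which is exactly the claim.

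In more careful form, the two points I would want to make explicit are: (a) the sets in question are nonempty and the minima are attained — this is where one would invoke that $G_0, G_1$ are non-decreasing and (implicitly) that they approach a value $\geq 1 - \alpha$ as $t \to \infty$, e.g. because they are probability-distribution-type functions with supremum $1 > 1 - \alpha$; in the discrete empirical setting of the paper the relevant sets are finite unions of intervals and the infimum is a genuine minimum, so this is routine. (b) The monotonicity of $G_0$ guarantees that once $G_0(t_1) \geq 1 - \alpha$, every $t \geq t_1$ also satisfies the constraint, so $\{ t : G_0(t) \geq 1-\alpha \}$ is an up-set and its minimum $t_0$ is well-defined and satisfies $t_0 \leq t_1$. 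Monotonicity of $G_1$ plays the symmetric role in defining $t_1$ but is not otherwise needed for the inequality direction we want.

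I do not expect a genuine obstacle here: the statement is a one-line consequence of "larger function reaches a given level sooner." The only place requiring a sentence of care is the existence/attainment of the minima (item (a) above), and possibly a remark that the inequality is preserved even when the level $1-\alpha$ is hit exactly rather than strictly exceeded — but since the constraint is stated with $\geq$, the argument $G_0(t_1) \geq G_1(t_1) \geq 1-\alpha$ goes through verbatim with no edge case. I would therefore write the proof in three or four lines: recall the definitions of $t_0$ and $t_1$, observe $G_0(t_1) \geq G_1(t_1) \geq 1-\alpha$, conclude $t_1 \in \{t : G_0(t) \geq 1-\alpha\}$, and hence $t_0 = \min\{t : G_0(t) \geq 1-\alpha\} \leq t_1$.
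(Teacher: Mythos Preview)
Your proposal is correct and is essentially the same elementary argument as the paper's: both deduce the inequality directly from the pointwise domination $G_0 \geq G_1$. The paper phrases it by rewriting $G_0(t) \geq 1-\alpha$ as $G_1(t) \geq 1-\alpha - (G_0(t)-G_1(t))$ and noting the right-hand side is at most $1-\alpha$, whereas you plug $t_1$ into the feasibility set for $G_0$ --- your formulation is arguably the cleaner of the two, but the content is identical.
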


\begin{proof}
(of Lemma \ref{lemma:decreased_quantile})

\begin{align*}
\min\{ t : G_0(t) \geq 1 - \alpha \}
= &
\min\{ t : G_1(t) \geq 1 - \alpha - \underbrace{ ( G_0(t) - G_1(t) ) }_{ \geq 0 } \}
\\
\leq &
\min\{ t : G_1(t) \geq 1 - \alpha \},
\end{align*}
where the inequality is due to the monotonic increase of $G_1$ for achieving a larger minimum value, from $1-\alpha - (G_0(t)-G_1(t))$ to $1-\alpha$.

\end{proof}

Lemma \ref{lemma:decreased_quantile} is simple and used very frequently in the following analysis. 
The main functionality of Lemma \ref{lemma:decreased_quantile} is to track the increase of quantile (the threshold found by taking $\min$ as shown above), as the decrease of probability function, which is generalized by $G_0(t)$ and $G_1(t)$.

\begin{proof} (Proof of Theorem \ref{theorem:improved_LCP_over_CP}, improved efficiency of NCP over CP)

In this proof, we use the notation narrowed down in Section 4 for the multi-class classification setting.
We begin with an artificial conformal prediction algorithm that is locally built in a class-wise manner, which is named by class-wise NCP and is different from NCP that locally builds quantile on each data sample.
Subsequently, we use this new algorithm as a proxy to build the connection between NCP and CP.
Specifically, we build expected quantile functions of NCP, class-wise NCP, and CP in population, respectively, all of which take the desired significance level $\alpha$ as input.
If we fix the desired significance level for all algorithms and derive the expected quantile values on the entire data distribution, then it is easy to compare and identify under what conditions, NCP can be more efficient (i.e., smaller prediction set/interval size) than CP.

Recall that NCP algorithm aims to find the following $\tilde \alpha$ given $\alpha$:
\begin{align*}
\alpha^\NCP(\alpha)
\triangleq
\max\{ \tilde \alpha : \P_{X} \{ X \leq Q^\NCP(\tilde \alpha; X) \} \geq 1 - \alpha \} ,
\end{align*}
where the quantile for $X$ with significance $\tilde \alpha$ is defined as follows.
\begin{align*}
Q^\NCP(\tilde \alpha; X)
\triangleq
\min\{ t : \underbrace{ \P_{X'} \{ V(X', F^*(X')) \leq t, X' \in \calN_\calB(X) \} }_{ \triangleq G^\NCP(t; X) } \geq 1 - \tilde \alpha \}
\end{align*}
is the quantile of $X$ derived from its neighborhood region $\calN_\calB$.
Note that the above definitions are in population, which is different from the empirical definition in (\ref{eq:LCP_tilde_alpha}).
Above $G^\NCP(t; X)$ is the coverage probability function given a threshold $t$ (like the function in Lemma \ref{lemma:decreased_quantile}).

Given $t$, for $G(t; X)$ and $F^*(X) = c \in [C]$, we have the following development:

\begin{align}\label{eq:probability_LCP_to_classLCP}
G^\NCP(t; X) =
&
\P_{X'} \{ V(X') \leq t, X' \in \calN_\calB(X) \}
\nonumber\\
= &
\P_{X} \{ X \in \calR_\calB^* \cap \calX_c \} \cdot \P_{X'} \{ V(X', F^*(X') ) \leq t, X' \in \calN_\calB(X) | X \in \calR_\calB^* \cap \calX_c \}
\nonumber\\
&
+ \underbrace{
\P_{X} \{ X \notin \calR_\calB^* \cap \calX_c \} \cdot \P_{X'} \{ V(X', F^*(X')) \leq t, X' \in \calN_\calB(X) | X \notin \calR_\calB^* \cap \calX_c \}
}_{ \geq 0 }
\nonumber\\
\stackrel{(a)}{\geq} &
\P_{X} \{ X \in \calR_\calB^* \cap \calX_c \} \cdot \P_{X'} \{ V(X', F^*(X')) \leq t, X' \in \calN_\calB(X) | X \in \calR_\calB^* \cap \calX_c \}
\nonumber\\
\stackrel{(b)}{\geq} &
( 1 - \mu_\calB ) \cdot \P_{X'} \{ V(X', F^*(X')) \leq t, X' \in \calN_\calB(X) | X \in \calR_\calB^* \cap \calX_c \}
\nonumber\\
\stackrel{(c)}{\geq} &
\underbrace{ ( 1 - \mu_\calB ) \cdot \sigma }_{ \triangleq \hat \sigma } \cdot \underbrace{ \P_{X'} \{ V(X', F^*(X')) \leq t | F^*(X') = c \} }_{ \triangleq G^\NCP_\class(t; c) },
\end{align}

where the above inequality $(a)$ is due to $\P\{ \cdot \} \geq 0$ for any event.
Inequality $(b)$ is due to $\mu_\calB$-separation of $\calD_\calX$ assumed in Assumption \ref{assumption:LCP}.
Inequality $(c)$ is due to $\sigma$-concentration of any quantile on $\calR_\calB^*$ assumed in Assumption \ref{assumption:LCP}.

As mentioned in Section 4, this concentration condition shows that data samples satisfying the certain quantile $t$ (i.e., for $X$ such that $V(X, F^*(X)) \leq t$) are significantly densely distributed on the robust set $\calR_\calB^*$ (or its class-wise partition $\calR_\calB^* \cap \calX_c$ for the class $c$).
This significance of dense distribution is captured by the condition number $\sigma$, which is explicitly shown as follows
\begin{align*}
\sigma 
\leq &
\frac{ \P_{X'} \{ V(X', F^*(X')) \leq t, X' \in \calN_\calB(X) | X \in \calR_\calB^* \cap \calX_c \} }{ \P_{X'} \{ V(X', F^*(X')) \leq t | F^*(X') = c \} }.
\end{align*}

Now we can design an artificial conformal prediction algorithm, i.e., {\it class-wise NCP}, to solve the following problem:
\begin{align*}
\alpha^\NCP_\class(\alpha)
\triangleq &
\min\{ \tilde \alpha : \P_{X} \{ X \leq Q_\class^\NCP(\tilde \alpha; F^*(X)) \} \geq 1 - \alpha \},
\end{align*}
where the {\it class-wise quantile} $Q_\class^\NCP(\tilde\alpha; c)$ for the class $c$ can be determined as follows
\begin{align}\label{eq:quantile_classLCP}
Q^\NCP_\class(\tilde \alpha; c)
\triangleq
\min\{ t : \underbrace{ \P_{X'}\{ V(X', F^*(X')) \leq t | F^*(X') = c \} }_{ = G^\NCP_\class(t; c) } \geq 1 - \tilde \alpha \}.
\end{align}

In the following three steps, we use this artificial class-wise NCP algorithm as a proxy to build connection between NCP and CP, respectively, from which we show the condition that makes NCP more efficient than CP.

{\bf Step 1. Connection between NCP and class-wise NCP}

In the first step, we show 
from inequalities in (\ref{eq:probability_LCP_to_classLCP}), we have
\begin{align*}
G^\NCP(t; X)
\geq
\underbrace{ (1-\mu_\calB) \sigma }_{ = \hat \sigma } \cdot G_\class^\NCP(t; F^*(X))
\end{align*}

Therefore, by Lemma \ref{lemma:decreased_quantile}, we have the following relation between the quantiles of NCP and class-wise NCP:
\begin{align}\label{eq:quantile_LCP_to_classLCP}
Q^\NCP(\tilde \alpha; X)
= &
\min\{ t : G(t; X) \geq 1 - \tilde \alpha \}
\nonumber\\
\leq &
\min\{ t : \hat \sigma \cdot G_\class^\NCP(t; Y) \geq 1 - \tilde \alpha \}
\nonumber\\
= &
\min\{ t : G_\class^\NCP(t; Y) \geq \frac{ 1 - \tilde \alpha }{ \hat \sigma } \}
\nonumber\\
= &
Q_\class^\NCP \Big( \frac{ \hat \sigma - 1 + \tilde \alpha }{ \hat \sigma } ; F^*(X) \Big) ,
\end{align}
where the last equality is due to 
\begin{align*}
\frac{ 1 - \tilde \alpha }{ \hat \sigma }
=
\frac{ \hat \sigma  - \hat \sigma + 1 - \tilde \alpha }{ \hat \sigma }
=
1 - \frac{ \hat \sigma - 1 + \tilde \alpha }{ \hat \sigma }.
\end{align*}

Note that 
\begin{align*}
\frac{ \hat \sigma - 1 + \tilde \alpha }{ \hat \sigma } - \tilde \alpha
= 
\frac{\hat \sigma - 1}{\hat \sigma} + \tilde \alpha \Big( \frac{1}{\hat \sigma} - 1 \Big)
=
1 - \frac{1}{\hat \sigma} + \tilde \alpha \Big( \frac{1}{\hat \sigma} - 1 \Big)
=
\Big( 1 - \frac{1}{\hat \sigma} \Big) ( 1 - \tilde \alpha )
\geq 0,
\end{align*}
so we have $\frac{ \hat \sigma - 1 + \tilde \alpha }{ \hat \sigma } \geq \tilde \alpha$.
This shows that, to achieve more certain prediction, i.e., smaller significance level $\tilde \alpha$, NCP requires smaller quantile than class-wise NCP, as shown in (\ref{eq:quantile_LCP_to_classLCP}).

Before we finish this step, we can have a further investigation of the expected quantile determined by NCP and its link to class-wise NCP in (\ref{eq:quantile_LCP_to_classLCP}) as follows.
\begin{align}\label{eq:class_conditional_quantile_LCP_to_classLCP}
\bar Q^\NCP(\tilde \alpha)
\triangleq
\sum_{c=1}^C \P\{ X \in \calX_c \} \cdot Q^\NCP(\tilde \alpha; X | X \in \calX_c )
\nonumber\\
\leq 
\sum_{c=1}^C \P\{ X \in \calX_c \} \cdot Q^\NCP_\class\Big( \frac{ \hat\sigma - 1 + \tilde \alpha }{ \hat \sigma } ; c \Big) .
\end{align}

{\bf Step 2. Connection between class-wise NCP and CP}

In this step, we show that $\alpha^\NCP_\class(\alpha) \leq \alpha$, which implies that to achieve the same significance level with CP, class-wise NCP can simply find the quantiles using $\alpha$ directly on each class.
To show this equality, we have
\begin{align*}
\alpha^\NCP_\class(\alpha)
= &
\max\{ \tilde \alpha : \sum_{c=1}^C \P_{X}\{ F^*(X) = c \} \cdot \P_{ X }\{ X \leq Q_\class^\NCP(\tilde \alpha; c) | F^*(X) = c\} \geq 1 - \alpha \}
\\
\leq &
\max\{ \tilde \alpha : \sum_{c=1}^C \P_{X}\{ F^*(X) = c \} \cdot ( 1 - \tilde \alpha ) \geq 1 - \alpha \}
\\
= &
\max\{ ( \tilde \alpha : 1 \cdot ( 1 - \tilde \alpha ) \geq 1 - \alpha \}
= \alpha .
\end{align*}
The above result states that to achieve $\alpha$ mis-coverage probability, class-wise NCP requires to find the quantile to give $\alpha$ mis-coverage for each of the classes, by which the final coverage probability can be lower bounded by $1-\alpha$.
However, only deriving the above significance level is not sufficient to investigate the efficiency of class-wise NCP.
Below, we try to lower bound the coverage probability function of class-wise NCP with CP quantile replacing class-wise NCP quantile.

We now use a somewhat similar idea as in (\ref{eq:probability_LCP_to_classLCP}), where we derive the coverage probability from NCP by using the presentation of class-wise NCP.
We make use of the assumptions of the robust set on the data sample distribution and the alignment between the quantile of this distribution property.
To build connection from class-wise NCP to CP, we now treat classes in a similar way: we group all classes into two categories, i.e., the robust class set and non-robust class set.

Specifically, let $R_\class(\tilde \alpha) = \{ c \in [C] : Q_\class^\NCP(\tilde \alpha; c) \leq Q^\CP(\tilde \alpha) \}$ denote the robust set of class labels, for which the quantile determined by class-wise NCP is smaller than that determined by CP given the same $\tilde \alpha$.
As a result, we can verify that
\begin{align*}
&
\P_X \{ V(X, F^*(X)) \leq Q_\class^\NCP(\tilde \alpha; F^*(X)) | F^*(X) \notin R_\class \}
\\
\geq &
\P_X \{ V(X, F^*(X)) \leq Q^\CP(\tilde \alpha) | F^*(X) \notin R_\class \}
\geq 
1 - \tilde \alpha .
\end{align*}
For the other group of classes, we can consider the worst-case, i.e., the minimum class distribution probability $P_\class^{min}$, to bound.
It is easy to see that $| R_\class(\tilde \alpha)| \geq 1$.

We now provide details of the relation of actual coverage probability for using different quantiles determined by class-wise NCP and CP with the same significance level $\tilde \alpha$.
For the class conditional coverage probability for class $c$, we can plug $Q_\class^\NCP(\tilde \alpha; c)$ in (\ref{eq:quantile_classLCP}) into $G_\class^\NCP(t; c)$ and replace $t$ as follows.
\begin{align*}
&
G_\class^\NCP( Q_\class^\NCP(\tilde \alpha; c) ; c )
\\
= &
\P_X \{ V(X, F^*(X)) \leq Q_\class^\NCP(\tilde \alpha; c) | F^*(X) = c \}
\\
= &
\P\{ c \in R_\class(\tilde \alpha) \} \cdot \P_X \{ V(X, F^*(X)) \leq Q_\class^\NCP(\tilde \alpha; c) | F^*(X) = c \in R_\class(\tilde \alpha) \}
\\
&
+ \P\{ c \notin R_\class(\tilde \alpha) \} \cdot \P_X \{ V(X, F^*(X)) \leq Q_\class^\NCP(\tilde \alpha; c) | F^*(X) = c \notin R_\class(\tilde \alpha) \}
\\
\stackrel{(a)}{\geq} & 
\P\{ c \in R_\class(\tilde \alpha) \} \cdot \P_X \{ V(X, F^*(X)) \leq Q_\class^\NCP(\tilde \alpha; c) | F^*(X) = c \in R_\class(\tilde \alpha) \}
\\
&
+ \P\{ c \notin R_\class(\tilde \alpha) \} \cdot \P_X \{ V(X, F^*(X)) \leq Q^\CP(\tilde \alpha) | F^*(X) = c \notin R_\class(\tilde \alpha) \}
\\
= &
\P_X \{ V(X, F^*(X)) \leq Q^\CP(\tilde \alpha) | F^*(X) = c \}
\\
&
+ \P\{ c \in R_\class(\tilde \alpha) \} \cdot 
\underbrace{
\P_X \{ V(X, F^*(X)) \leq Q_\class^\NCP(\tilde \alpha; c) | F^*(X) = c \in R_\class(\tilde \alpha) \}
}_{ \geq 1-\tilde \alpha }
\\
&
- \P\{ c \in R_\class(\tilde \alpha) \} \cdot \P_X \{ V(X, F^*(X)) \leq Q^\CP(\tilde \alpha) | F^*(X) = c \in R_\class(\tilde \alpha) \}
\\
\stackrel{(b)}{\geq} &
\P_X \{ V(X, F^*(X)) \leq Q^\CP(\tilde \alpha) | F^*(X) = c \}
\\
&
+ \P\{ c \in R_\class(\tilde \alpha) \} \cdot (1 - \tilde \alpha)
\\
&
- \P\{ c \in R_\class(\tilde \alpha) \} \cdot \P_X \{ V(X, F^*(X)) \leq Q^\CP(\tilde \alpha) | F^*(X) = c \in R_\class(\tilde \alpha) \}
\\
= &
\P_X \{ V(X, F^*(X)) \leq Q^\CP(\tilde \alpha) | F^*(X) = c \}
\\
&
+ \P\{ c \in R_\class(\tilde \alpha) \} \cdot \Big( 1 - \tilde \alpha 
- \underbrace{ 
\P_X \{ V(X, F^*(X)) \leq Q^\CP(\tilde \alpha) | F^*(X) = c \in R_\class(\tilde \alpha) \}
}_{ \leq 1 }
\Big)
\\
\stackrel{(c)}{\geq} &
\P_X \{ V(X, F^*(X)) \leq Q^\CP(\tilde \alpha) | F^*(X) = c \}
- \tilde \alpha ( 1 - P_\class^{min} ) 
,
\end{align*}
where the first inequality $(a)$ is due to the definition of $R_\class(\tilde \alpha)$, i.e., we can replace $Q_\class^\NCP(\tilde \alpha; c)$ with $Q^\CP(\tilde \alpha)$ for $c \notin R_\class(\tilde \alpha)$.
The second inequality $(b)$ is due to 
$$\P_X\{ V(X, F^*(X)) \leq Q_\class^\NCP(\tilde \alpha; c) | F^*(X) = c \in R_\class(\tilde \alpha) \} \geq 1 - \tilde \alpha.$$
The last inequality $(c)$ is due to 
$$\P_X \{ V(X, F^*(X)) \leq Q^\CP(\tilde \alpha) | F^*(X) = c \in R_\class(\tilde \alpha) \} \leq 1 ,$$ 
and note that if $| R_\class | \neq C$ (at least one class not in $R_\class$), then
\begin{align*}
&
\P\{ c \notin R_\class(\tilde \alpha) \} \geq P_\class^{min}
\\
\Rightarrow & \\
&
\P\{ c \in R_\class(\tilde \alpha) \} \leq 1 -  P_\class^{min} .
\end{align*}

From above, after inequality $(a)$, we can view it as the class-conditional coverage of a new conformal prediction algorithm, i.e., using $Q^\NCP_\class(\tilde \alpha; c)$ for $c \in R_\class(\tilde \alpha)$ and using $Q^\CP(\tilde \alpha)$ for $c \notin R_\class(\tilde \alpha)$, which definitely has smaller expected quantile than CP across all classes.

Then we have the following full coverage probability using class conditional ones:
\begin{align*}
&
\sum_{c=1}^C \P\{ X \in \calX_c \} \cdot \underbrace{ \P_X\{ V(X, F^*(X)) \leq Q_\class^\NCP(\tilde \alpha; c) \} | F^*(X) = c }_{ = G_\class^\NCP(\tilde \alpha; c) } \}
\\
\geq &
\sum_{c=1}^C \P\{ X \in \calX_c \} \cdot \P_X\{ V(X, F^*(X)) \leq Q^\CP(\tilde \alpha) | F^*(X) = c \} 
- \tilde \alpha ( 1 - P_\class^{min} )
\\
= &
\P_X\{ V(X, F^*(X)) \leq Q^\CP(\tilde \alpha) \}
- \tilde \alpha (1 - P_\class^{min} )
\geq 
1 - \tilde \alpha - \tilde \alpha ( 1 - P_\class^{min} )
\\
= &
1 - \tilde \alpha ( 2 - P_\class^{min} ) .
\end{align*}
The above inequality shows that if using a constant quantile (determined by CP using target $\tilde \alpha$ significance level) in class-wise NCP setting, we only achieve $1 - \tilde \alpha ( 2 - P_\class^{min} )$ coverage.
To achieve the same level of $1 - \tilde \alpha$ as CP using this constant quantile, then we need to reduce the mis-coverage parameter to $\tilde \alpha / ( 2 - P_\class^{min})$ from $\tilde \alpha$.

Then we can compute the upper bound of the expected quantile of class-wise NCP as follows:
\begin{align}\label{eq:quantile_classLCP_to_CP}
\bar Q_\class^\NCP(\tilde \alpha)
\triangleq
\sum_{c=1}^C \P\{ X \in \calX_c \} \cdot Q_\class^\NCP(\tilde \alpha; c)
\leq 
\sum_{c=1}^C \P\{ X \in \calX_c \} \cdot Q^\CP (\tilde \alpha / ( 2 - P_\class^{min} ) )
=
Q^\CP(\tilde \alpha / ( 2 - P_\class^{min} ) ) .
\end{align}

{\bf Step 3: Connection between the quantiles determined by NCP and CP via class-wise NCP}

Now we have by (\ref{eq:quantile_classLCP_to_CP}):
\begin{align*}
\bar Q_\class^\NCP(\tilde \alpha)
\leq 
Q^\CP( \tilde \alpha / ( 2 - P_\class^{min} ) ),
\end{align*}
and by (\ref{eq:quantile_LCP_to_classLCP}) and (\ref{eq:class_conditional_quantile_LCP_to_classLCP}):
\begin{align*}
\bar Q^\NCP(\tilde \alpha)
\leq 
\sum_{c=1}^C \P\{ X \in \calX_c \} \cdot Q^\NCP_\class\Big( \frac{ \hat \sigma - 1 + \tilde \alpha }{ \hat \sigma } ; c \Big)
=
\bar Q_\class^\NCP\Big( \frac{ \hat \sigma - 1 + \tilde \alpha }{ \hat \sigma } \Big) .
\end{align*}

If we would like to make NCP more efficient (i.e., smaller predicted set/interval size) than CP, then the following inequality is required to hold:
\begin{align*}
\bar Q^\NCP(\tilde \alpha)
\leq 
\bar Q_\class^\NCP\Big( \frac{ \hat \sigma - 1 + \tilde \alpha }{ \hat \sigma } \Big)
\leq 
Q^\CP \Big( \frac{ \hat \sigma - 1 + \tilde \alpha }{ \hat \sigma ( 2 - P_\class^{min} ) } \Big),
\end{align*}
or equivalently by letting $\alpha = \frac{ \hat \sigma - 1 + \tilde \alpha }{ \hat \sigma ( 2 - P_\class^{min} ) }$:
\begin{align*}
\bar Q^\NCP( 1 - \hat \sigma ( 1 - ( 2 - P^{min}_\class ) \alpha ) )
\leq 
Q^\CP( \alpha ) .
\end{align*}

It suffices to make sure the mis-coverage has the following relation:
\begin{align*}
& 1 - \hat \sigma ( 1 - ( 2 - P^{min}_\class ) \alpha
\leq 
\alpha
\\
\Leftrightarrow & \\
&
1 - \alpha 
\leq 
\hat \sigma ( 1 - ( 2 - P^{min}_\class) \alpha )
\\
\Leftrightarrow & ~~(\text{due to } \alpha \leq 1 / 2 \text{ according to Assumption \ref{assumption:LCP}}) \\
&
\hat \sigma
\geq 
\frac{ 1 - \alpha }{ 1 - ( 2 - P_\class^{min}) \alpha } ,
\end{align*}
where the last inequality holds under Assumption \ref{assumption:LCP}.

\end{proof}

\subsection{Additional Experiments}

\begin{table}[!h]
\centering
\caption{Clustering coefficient for ImageNet-val data. Silhouette score can have values between -1 to 1. Higher silhouette score means better clustering property. These results justify the use of the learned representation to define the neighborhood and weighting function for NCP algorithm.}
\label{tab:cvg}

\begin{tabular}{|c|c|c|c|c|}

    \hline
     Models & silhouette score(raw data) & silhouette score(representation)\\
     \hline
     ResNetXt101&-0.379&0.076 \\
     \hline
     ResNet152&-0.510&0.052\\
     \hline
     ResNet101&-0.477&0.045\\
     \hline
     ResNet50&-0.444&0.034\\
     \hline
     ResNet18&-0.372&0.001\\
     \hline
     DenseNet161&-0.336&0.021\\
     \hline
     VGG16&-0.183&0.012\\
     \hline
     Inception&-0.483&0.060\\
     \hline
     ShuffleNet&-0.346&-0.003\\
     \hline
\end{tabular}

\end{table}

\begin{table}[!h]
\centering
\caption{Ablation results comparing NCP and NCP variant using all calibration examples as neighborhood (NCP-All). Both NCP and NCP-All use the conformity scoring function of APS. NCP reduces the mean prediction set size as shown in table \ref{tab:LCP_NCP} by using a relatively smaller neighborhood. Both NCP and NCP-All achieve $1 - \alpha$ coverage.}
\label{tab:LCP_NCP_cvg}
\begin{tabular}{|c|cc|cc|cc|}
\hline
\multicolumn{1}{|c|}{Dataset}   & \multicolumn{2}{c|}{ImageNet}             & \multicolumn{2}{c|}{CIFAR100}             & \multicolumn{2}{c|}{CIFAR10}              \\ \hline
             $1 - \alpha$                   & \multicolumn{2}{c|}{$0.900$} & \multicolumn{2}{c|}{$0.900$} & \multicolumn{2}{c|}{$ 0.960$} \\ \hline
\multicolumn{1}{|c|}{Models}     & \multicolumn{1}{c|}{NCP-All}       & NCP      & \multicolumn{1}{c|}{NCP-All}       & NCP      & \multicolumn{1}{c|}{NCP-All}       & NCP      \\ \hline
\multicolumn{1}{|c|}{ResNet18}  & \multicolumn{1}{c|}{0.905}     & 0.901    & \multicolumn{1}{c|}{0.928}     & 0.908    & \multicolumn{1}{c|}{0.980}     & 0.962    \\ \hline
\multicolumn{1}{|c|}{ResNet50}  & \multicolumn{1}{c|}{0.921}     & 0.902    & \multicolumn{1}{c|}{0.939}     & 0.906    & \multicolumn{1}{c|}{0.982}     & 0.965    \\ \hline
\multicolumn{1}{|c|}{ResNet101} & \multicolumn{1}{c|}{0.920}     & 0.903    & \multicolumn{1}{c|}{0.936}     & 0.907    & \multicolumn{1}{c|}{0.983}     & 0.966    \\ \hline
\end{tabular}
\end{table}

\begin{table}[!h]
\centering
\caption{The average percentage of calibration examples used by \textbf{NCP(APS)} as nearest neighbors  to produce prediction sets. RN denotes ResNet.}
\label{tab:K_fraction}
\begin{tabular}{|c|c|c|c|c|c|c|c|c|c|}

\hline
Model->     & RN152 & RN101 &  RN50 & RN18 & ResNeXt101 & DenseNet161 & VGG16 & Inception & ShuffleNet\\ \hline
ImageNet  &  28.0 & 33.2 & 12.8 & 24.8 & 30.0 & 15.6 & 46.0 & 30.0 & 24.0\\ \hline
CIFAR100  & - & 12.0 & 08.3 & 12.3 & -  & - & - & - & -  \\ \hline
CIFAR10 &  - & 13.0 & 15.0 & 09.0 & - & - & - & - & -  \\ \hline
\end{tabular}
\end{table}

\begin{figure}[!h]
    \centering
    \includegraphics[scale = 0.4]{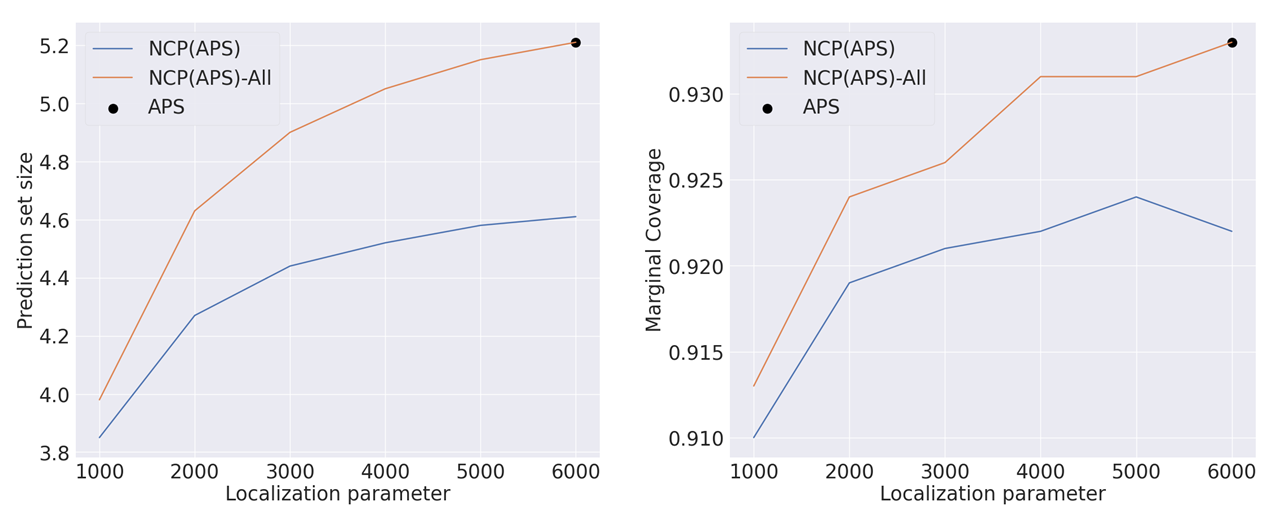}
    \caption{Results showing coverage and prediction set size of d prediction set size of APS, NCP(APS), and NCP(APS)-All for classification as a function of the localization parameter for ResNet18 model on CIFAR100 dataset. A very high value of localization parameter degenerates to APS.}
    \label{fig:classi_loc_RN18}
\end{figure}

\begin{figure}[!h]
    \centering
    \includegraphics[scale = 0.4]{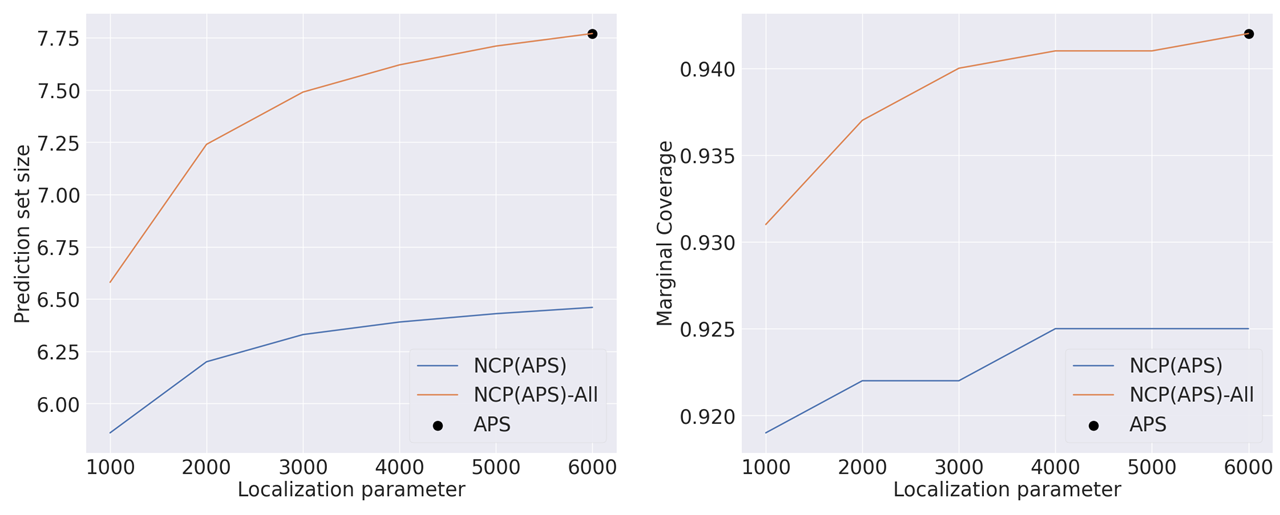}
    \caption{Results showing coverage and prediction set size of d prediction set size of APS, NCP(APS), and NCP(APS)-All for classification as a function of the localization parameter for ResNet50 model on CIFAR100 dataset. A very high value of localization parameter degenerates to APS.}
    \label{fig:classi_loc_RN50}
\end{figure}

\begin{figure}[!h]
    \centering
    \includegraphics[scale = 0.4]{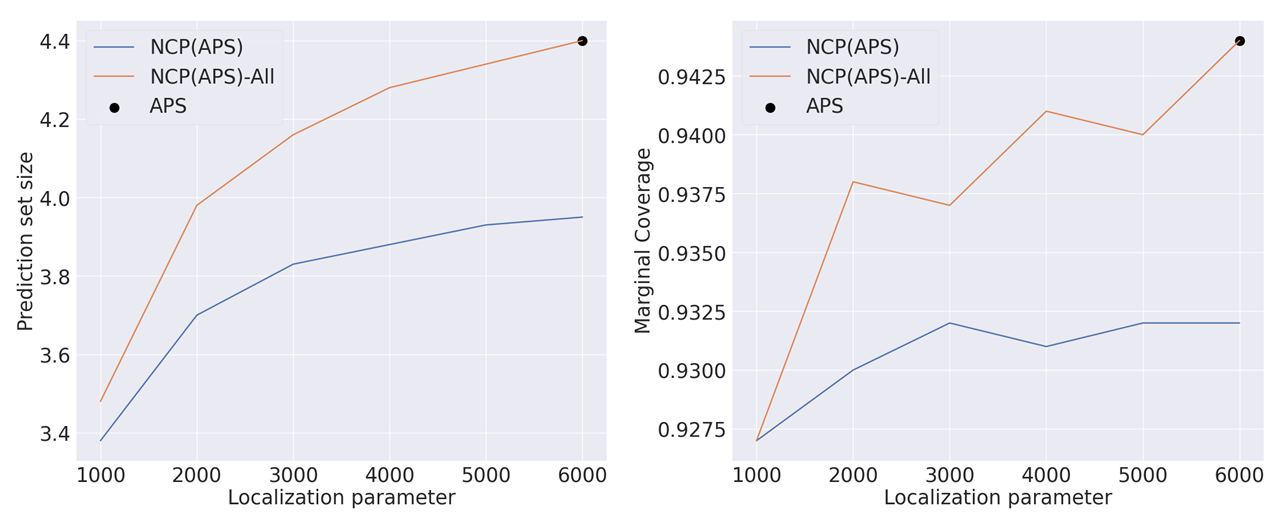}
    \caption{Results showing coverage and prediction set size of APS, NCP(APS), and NCP(APS)-All for classification as a function of the localization parameter for ResNet101 model on CIFAR100 dataset. A very high value of localization parameter degenerates to APS.}
    \label{fig:classi_loc_RN101}
\end{figure}

\begin{table}[]
\centering
\begin{tabular}{|ccccc|}
\hline
\multicolumn{1}{|c|}{Model}       & \multicolumn{1}{c|}{Inference} & \multicolumn{1}{c|}{RAPS} & \multicolumn{1}{c|}{NCP(RAPS)} & NCP(RAPS) with LSH\\ \hline
\multicolumn{5}{|c|}{ImageNet}                                                                                                                                                      \\ \hline
\multicolumn{1}{|c|}{ResNet152}   & \multicolumn{1}{c|}{1.215}        & \multicolumn{1}{c|}{1.043}        & \multicolumn{1}{c|}{1.154}             & 1.046                          \\ \hline
\multicolumn{1}{|c|}{ResNet101}   & \multicolumn{1}{c|}{1.106}        & \multicolumn{1}{c|}{1.022}        & \multicolumn{1}{c|}{1.188}             & 1.037                          \\ \hline
\multicolumn{1}{|c|}{ResNet50}    & \multicolumn{1}{c|}{1.013}        & \multicolumn{1}{c|}{1.062}        & \multicolumn{1}{c|}{1.162}             & 1.071                          \\ \hline
\multicolumn{1}{|c|}{ResNet18}    & \multicolumn{1}{c|}{0.963}        & \multicolumn{1}{c|}{1.019}        & \multicolumn{1}{c|}{1.171}             & 1.028                          \\ \hline
\multicolumn{1}{|c|}{ResNeXt101}  & \multicolumn{1}{c|}{1.748}        & \multicolumn{1}{c|}{1.070}        & \multicolumn{1}{c|}{1.121}             & 1.071                          \\ \hline
\multicolumn{1}{|c|}{DenseNet161} & \multicolumn{1}{c|}{1.690}        & \multicolumn{1}{c|}{1.030}        & \multicolumn{1}{c|}{1.065}             & 1.044                          \\ \hline
\multicolumn{1}{|c|}{VGG16}       & \multicolumn{1}{c|}{1.109}        & \multicolumn{1}{c|}{1.027}        & \multicolumn{1}{c|}{1.153}             & 1.046                          \\ \hline
\multicolumn{1}{|c|}{Inception}   & \multicolumn{1}{c|}{1.434}        & \multicolumn{1}{c|}{1.050}        & \multicolumn{1}{c|}{1.178}             & 1.051                          \\ \hline
\multicolumn{1}{|c|}{ShuffleNet}  & \multicolumn{1}{c|}{0.914}        & \multicolumn{1}{c|}{1.042}        & \multicolumn{1}{c|}{1.101}             & 1.043                          \\ \hline
\multicolumn{5}{|c|}{CIFAR100}                                                                                                                                                      \\ \hline
\multicolumn{1}{|c|}{ResNet18}    & \multicolumn{1}{c|}{0.039}        & \multicolumn{1}{c|}{0.331}        & \multicolumn{1}{c|}{0.388}             & 0.332                          \\ \hline
\multicolumn{1}{|c|}{ResNet50}    & \multicolumn{1}{c|}{0.143}        & \multicolumn{1}{c|}{0.333}        & \multicolumn{1}{c|}{0.367}             & 0.336                          \\ \hline
\multicolumn{1}{|c|}{ResNet101}   & \multicolumn{1}{c|}{0.248}        & \multicolumn{1}{c|}{0.330}        & \multicolumn{1}{c|}{0.361}             & 0.331                          \\ \hline
\multicolumn{5}{|c|}{CIFAR10}                                                                                                                                                       \\ \hline
\multicolumn{1}{|c|}{ResNet18}    & \multicolumn{1}{c|}{0.049}        & \multicolumn{1}{c|}{0.291}        & \multicolumn{1}{c|}{0.311}             & 0.292                          \\ \hline
\multicolumn{1}{|c|}{ResNet50}    & \multicolumn{1}{c|}{0.119}        & \multicolumn{1}{c|}{0.295}        & \multicolumn{1}{c|}{0.311}             & 0.298                          \\ \hline
\multicolumn{1}{|c|}{ResNet101}   & \multicolumn{1}{c|}{0.162}        & \multicolumn{1}{c|}{0.298}        & \multicolumn{1}{c|}{0.313}             & 0.311                          \\ \hline
\end{tabular}
\caption{Runtime comparison (in seconds) during testing time for using different settings. The inference time is shown to provide a frame of reference for the additional CP runtime overhead. We show two variants of NCP algorithm: NCP with standard KNN search algorithm and NCP using Locality Sensitive Hashing (LSH) based KNN search.}
\label{Time_comparison}
\end{table}

\clearpage

\end{document}